\def\BibTeX{{\rm B\kern-.05em{\sc i\kern-.025em b}\kern-.08em
		T\kern-.1667em\lower.7ex\hbox{E}\kern-.125emX}}
\definecolor{mygreen}{RGB}{32,178,170}  
\definecolor{mygolden}{RGB}{255,140,0} 
\newcommand{\sublabel}[1]{\protected@edef\@currentlabel{\thefigure(\thesubfigure)}\label{#1}}
\newcommand{\A}{\mathbf{A}}
\newcommand{\diag}{\text{diag}}
\newtheorem{theorem}{Theorem}
\newtheorem{remark}{Remark}
\newtheorem{lemma}{Lemma}
\newtheorem{corollary}{Corollary}
\newtheorem{definition}{Definition}
\newtheorem{assumption}{Assumption}
\begin{document}
	\captionsetup{font={small}}
	
	\title{
		Amplitude-Varying Perturbation for Balancing Privacy and Utility in Federated Learning
	}
	\author{Xin Yuan,~\IEEEmembership{Member, IEEE}, Wei Ni,~\IEEEmembership{Senior Member, IEEE}, Ming Ding,~\IEEEmembership{Senior Member, IEEE}, \\
		Kang Wei,~\IEEEmembership{Member, IEEE}, Jun Li,~\IEEEmembership{Senior Member, IEEE}, and H. Vincent Poor,~\IEEEmembership{Life Fellow, IEEE}
		\thanks{
			X. Yuan, W. Ni, and M. Ding are with Data61, CSIRO, Sydney, Australia (e-mail: {xin.yuan, wei.ni, ming.ding}@data61.csiro.au).
			
			K. Wei and J. Li are with the School of Electrical and Optical Engineering, Nanjing University of Science and Technology, Nanjing 210094, China (e-mail: {kang.wei, jun.li}@njust.edu.cn).
			
			H. V. Poor is with the Department of Electrical and Computer Engineering, Princeton University, Princeton, NJ 08544 USA (e-mail: poor@princeton.edu).
			
	}}

	\markboth{ACCEPTED BY IEEE TRANSACTIONS ON INFORMATION FORENSICS \& SECURITY}
	{YUAN \MakeLowercase{\textit{et al.}}: Amplitude-Varying Perturbation for Balancing Privacy and Utility in Federated Learning}
	\maketitle

	\begin{abstract}
		While preserving the privacy of federated learning (FL), differential privacy (DP) inevitably degrades the utility (i.e., accuracy) of FL due to model perturbations caused by DP noise added to model updates. 
		Existing studies have considered exclusively noise with persistent root-mean-square amplitude and overlooked an opportunity of adjusting the amplitudes to alleviate the adverse effects of the noise.
		This paper presents a new DP perturbation mechanism with a time-varying noise amplitude to protect the privacy of FL and retain the capability of adjusting the learning performance. 
		Specifically, we propose a geometric series form for the noise amplitude and reveal analytically the dependence of the series on the number of global aggregations and the $(\epsilon,\delta)$-DP requirement.
		We derive an online refinement of the series to prevent FL from premature convergence resulting from excessive perturbation noise.
		Another important aspect is an upper bound developed for the loss function of a multi-layer perceptron (MLP) trained by FL running the new DP mechanism. 
		Accordingly, the optimal number of global aggregations is obtained, balancing the learning and privacy. 
		Extensive experiments are conducted using MLP, supporting vector machine, and convolutional neural network models on four public datasets. The contribution of the new DP mechanism to the convergence and accuracy of privacy-preserving FL is corroborated, compared to the state-of-the-art Gaussian noise mechanism with a persistent noise amplitude.
		
	\end{abstract}
	
	\begin{IEEEkeywords}
		Federated learning, differential privacy, time-varying noise variance, convergence analysis.
	\end{IEEEkeywords}
	
	\section{Introduction}\label{sec-intro}
	Federated learning (FL) 
	trains machine learning (ML) models at individual devices without the need to surrender any sensitive raw data of the devices to central servers~\cite{Li2020Federated}.
	It provides 
	an effective means of model training  without directly leaking private information~\cite{yang2019federated}.
	Despite its significant potential for privacy protection, 
	there are still risks of revealing sensitive
	information in the individual models uploaded to an aggregator (e.g., a central server) on each aggregation round of FL.
	In particular,  the local models learned from the respective local datasets can be reverse-engineered (e.g., by a curious central server) to extract private information~\cite{Milad2019Comprehensive, toch2018towards}. 
	Notably, Shokri~{\textit{et al.}}~\cite{Shokri2017Membership} demonstrated that private information about local datasets can be derived from trained local models. 
	Moreover, model inversion attacks have been shown to be able to extract private information by using black-box attacks to predict models~\cite{fredrikson2015model, wang2015regression}. 
	
	Privacy-preserving FL is a promising method for solving the above challenges~\cite{wu2021incentivizing,geyer2017differentially, lu2019differentially, bhowmick2018protection,Ma2020On}. 
	It incorporates privacy techniques into distributed ML frameworks to deliver a provable guarantee of privacy protection~\cite{bassily2014private, Abadi2016Deep, Chollet2017Xception,Wei2022User, Wei2022Low}. 
	A Trusted Execution Environment (TEE) is a secure and isolated computing environment that uses hardware and software encryption to protect sensitive data and ensure the accuracy of computations. It guarantees the confidentiality and integrity of an individual client's application, even in an untrusted environment~\cite{nguyen2022federated}. In FL, TEEs can be adopted by clients for local training and/or by central servers for secure aggregation of local updates to prevent attacks on the models or gradients~\cite{mo2021ppfl,prakash2022Secure}.
	Differential privacy (DP)~\cite{mironov2017renyi} is the \textit{de facto} privacy mechanism that has been increasingly studied, 
	including $\epsilon$-DP, $(\epsilon, \delta)$-DP, R{\'e}nyi DP, and $(\alpha, \epsilon)$-R{\'e}nyi DP~\cite{zhu2017differentially, acs2018differentially, yu2019differentially, barthe2011information, mironov2017renyi}.
	In~\cite{yu2019differentially}, 
	a time-varying noise perturbation mechanism was proposed, 
	where a time-decaying noise was added to the model parameters.
	In~\cite{barthe2011information}, 
	information-theoretic bounds were derived to establish a connection between information leakage and DP.

	DP mechanisms have been increasingly integrated into FL, aiming to learn a secure global model while providing privacy guarantees for local datasets. 
	This allows the clients to efficiently train their local models with privacy protected according to local settings.
	Truex~{\textit{et al.}}~\cite{Truex2020Federated} proposed an FL system with local differential privacy (LDP) to ensure data privacy. 
	The system can perform LDP-based perturbation on model parameter updating and sharing, according to the local privacy level.
	In~\cite{Wei2020Federated}, a framework based on the DP was proposed to prevent information leakage by injecting noise to protect the privacy of the local model parameters.
	In~\cite{Zhao2020Local}, three LDP mechanisms were developed to preserve privacy in different data analysis tasks. 
	The LDP mechanisms were integrated into FL to predict traffic status, alleviate privacy threats, and reduce communications in crowd-sourcing applications. 
	The above studies combined the local DP mechanisms with FL to address privacy issues. 
	However, considerable noises are needed to perturb the local model parameters, reducing the efficiency and accuracy of FL.

	The studies discussed above, i.e.,~\cite{Truex2020Federated,Wei2020Federated,Zhao2020Local}, have considered time-invariant DP noise perturbations; in other words, the variance of the DP perturbation noise remains persistent among global aggregations. This could require a long training or convergence time and degrade the learning performance, such as validation accuracy~\cite{frisk2021super}. On the other hand, there is clearly a potential to allow the variance of the DP perturbation noise to be adaptively configured and changed over different global aggregations, hence improving the learning performance of FL without compromising the privacy protection level. In particular, a small perturbation noise in the early stage of an FL process is expected to benefit convergence~\cite{cheng2022differentially}, as also observed experimentally in this paper. 
	
	This paper presents a new $(\epsilon, \delta)$-DP amplitude-varying perturbation mechanism with a meticulously designed time-varying root-mean-square amplitude (or amplitude for short) of the perturbation noise to strike a balance between the privacy protection and the utility (i.e., loss and/or accuracy) of FL. The DP noise can be further adjusted online to combat the potential degradation of the utility, achieving effective learning while preserving privacy.
	
	The key contributions are listed as follows.
	\begin{itemize}
		\item We design the new $(\epsilon, \delta)$-DP perturbation mechanism, where the variance (i.e., the square of the amplitude) of the DP noise is a geometric series changing over the global aggregations of the FL to provide privacy guarantees. 
		
		\item 
		By privacy analysis, we derive the variance of the DP noise given the global aggregation number and privacy protection level of the FL. We also design the online adjustment of the variance and global aggregation rounds in the face of a model degradation of the FL. 
		
		\item An upper bound is derived for the loss of a multi-layer perceptron (MLP) model trained by FL running the new DP mechanism, establishing an analytical trade-off between the loss and privacy. 
		
		\item Based on the upper bound, an optimal number of global aggregations is identified to achieve the best utility of the FL and satisfy the privacy requirement. 
		
	\end{itemize}
	Extensive experiments based on an MLP model show that the new DP mechanism with time-varying noise variance  converges faster to better learning accuracy for a given privacy level, compared to the state-of-the-art noise perturbation with persistent variance, i.e., the Gaussian noise mechanism~\cite{Wei2020Federated}.
	Moreover, the new mechanism is readily applicable to other deep neural network (DNN) models, such as support vector machine (SVM) and convolutional neural network (CNN).

	The remainder of this paper is arranged as follows. The system and threat models are provided in Section~\ref{sec-definition}. In Section~\ref{sec-problem}, we elaborate on the new DP mechanism, analyze its sensitivity and privacy, and the time-varying variance of the DP noise. Section~\ref{sec-convergence analysis} derives the convergence upper bound of FL running the new DP mechanism, and the optimal number of global aggregations to achieve both convergence and privacy.
	Experimental results are discussed in Section~\ref{sec-results}, followed by concluding remarks in Section~\ref{sec-conclusion}.
	
	\textit{Notation:} $(\cdot)^H$, $(\cdot)^{\top}$ and $(\cdot)^c$ are the Hermitian transpose, transpose, and conjugate of a matrix/vector, respectively. $|\cdot|$ takes element-wise absolute values. $\left\|\cdot \right\| $ denotes $\ell_2$-norm. $(\A)_{n,m}$ and $(\A)_{\cdot,m}$ stand for the $(n,m)$-th element and the $m$-th column of the matrix $\A$, respectively. 
	$\diag\{a_n\}$ stands for a diagonal matrix with $a_n,\forall n$ along its diagonal. The notation used is collated in Tab.~\ref{table_notations}.
	
	\begin{table}[!t]\small
		\caption{Summary of Notation}
		\begin{center}
			\begin{tabular}{ll}
				\toprule[1.5pt]
				Notation & Description \\
				\hline
				$\cal M$ & A random DP mechanism \\
				$\cal D$, ${\cal D}'$ & Adjacent datasets \\
				$\epsilon$, $\delta$ & DP requirement\\
				${\cal D}_k$ & Dataset held by user ${\cal C}_k$\\
				$\nabla F(\cdot)$ & Gradient of a function $F(\cdot)$\\
				$U$ & Total number of users \\
				$K$ & Number of chosen users\\
				$t$ & Iteration index\\
				$T$ & Total number of iterations\\
				$\tau$ & Number of local training iterations between two \\
				& global aggregations\\
				$M$ & Maximum number of global aggregations\\
				$\bm{\omega}$ & Parameters of the model\\
				$F(\bm{\omega})$ & Global loss function\\
				$f_k(\bm{\omega})$ & Loss function of the $k$-th user\\
				${\bm{\omega}}_k(t)$ & Local model parameters of the $k$-th user\\
				$\tilde{\bm{\omega}}_k(t)$ & Local model parameters of the $k$-th user\\ & after adding noises\\
				${\bm{\omega}}(m)$ & The aggregated model parameters for the $m$-th \\
				& global aggregation, $m=0,1,\cdots,M$\\
				${\bm{\omega}}^*$ & Optimal model parameters\\
				\toprule[1.5pt]
			\end{tabular}
		\end{center}
		\label{table_notations}
	\end{table}

	\section{System Model}\label{sec-definition}
	This section introduces the system and threat models of the considered FL system. 
	
	\begin{figure}[tb]
		\centering
		\includegraphics[width=0.9\columnwidth]{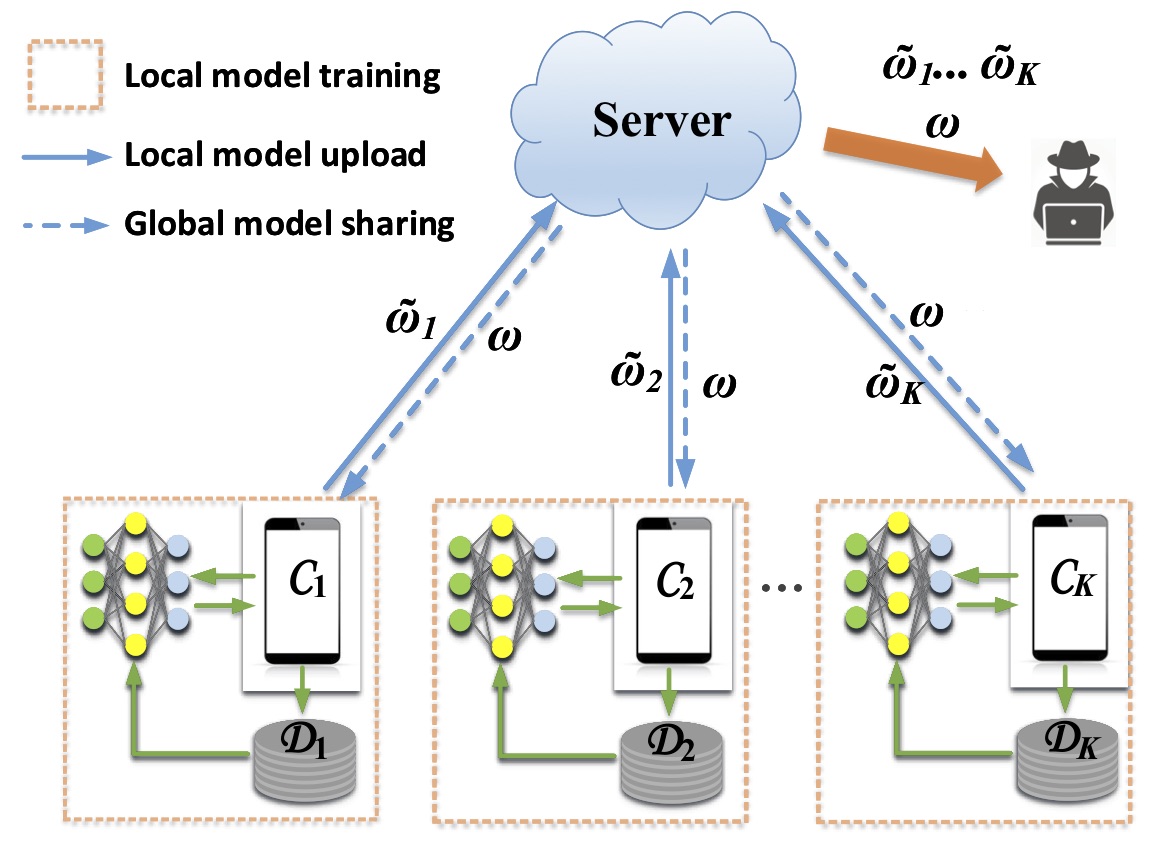}
		\caption{An FL model with an honest server and an attacker trying to eavesdrop on or capture private information from both the local model upload and global model sharing process.}
		\label{fig-training-model}
	\end{figure}

	\subsection{Federated Learning}
	The considered FL system consists of $U$ users collected by the set ${\cal U} = \left\lbrace1,\cdots, U \right\rbrace $ and an honest-but-curious parameter server, as illustrated in Fig.~\ref{fig-training-model}. ${\cal D}_i$ is the local dataset at user $i \in {\cal U}$. 
	$|{\cal D}|=\sum_{i \in {\cal U}} |{\cal D}_i|$ is the collection of all data samples.
	The server wishes to train a global model on the datasets residing at the $U$ users, by minimizing the global loss function, $F(\omega)$:
	\begin{equation}\label{eq-problem}
	{\bm \omega}^* \triangleq \arg \underset{\bm \omega}{\min} F({\bm \omega}),
	\end{equation}
	where $\bm{\omega}$ stands for the model parameter, 
	$\bm{\omega}^*$ stands for the optimal model parameter minimizing the global loss function, 
	and  ${\cal K}  \in {\cal U}$ is a set of $K$ randomly chosen  users from  $\cal U $ between two consecutive global aggregations.
	$F({\bm{\omega}}) = \sum_{k \in {\cal K}} p_k f_k({\bm \omega})$, with 
	$f_k({\bm{\omega}})$ being the loss function of the $k$-th user in $\cal K$, and $p_k = {|{\cal D}_k|}/{\sum_{k\in {\cal K}}|{\cal D}_k|}$ with $\sum_{k \in {\cal K}} p_k =1$.
	
	We consider using DP to prevent the privacy leakage of a distributed gradient descent-based FL system. 
	$M$ is the maximum number of global aggregations. $\tau$ is the number of local update iterations between two consecutive aggregations. $T$ is the total number of iterations. $T = M \tau$. 
	The FL and DP operations are summarized in \textbf{Algorithm~\ref{algo_DGD}}, where ${\bm \omega}(m)$ denotes the global model parameter obtained at the $m$-th global model aggregation with the initial global model ${\bm \omega}(0)$, ${\bm \omega}_k(t)$ stands for the local model parameter of the $k$-th user at the $t$-th iteration,
	and $q$ is the ratio of selected and participating users (i.e., $K = |{\cal K}| = qU$). $t \in [0,T]$ is the iteration index. 
	For the $k$-th user, the local model is updated by
	\begin{equation}\label{eq_local_SGD}
	\bm{\omega}_k(t+1) = {\bm{\omega}}_k(t) - \eta \nabla f_k ({\bm{\omega}}_k(t)),
	\end{equation} 
	where $\eta$ is the step size. The $k$-th user clips the local model parameter ${\bm \omega}_k(t)$ with a pre-determined threshold $C$, i.e., $\left\|\bm{\omega}_k(t+1) \right\| \leq C$. 
	
	After every $\tau$ local updates (or iterations), e.g., the $\tau m$-th iteration, the users inject the DP noises into their local models and upload the perturbed local models, denoted by $\tilde{\bm{\omega}}_k(\tau m)$, to the parameter server for the $m$-th global aggregation: 
	\begin{equation}\label{eq-noised-parameter}
	\tilde{\bm{\omega}}_k(\tau m) = {\bm{\omega}}_k (\tau m) + {\cal N}(0,\varTheta(m) {\bm I}), \; m \in [1, M],
	\end{equation}	
	where $\varTheta(m)$ is the variance of the DP noise added to the local models for the  $m$-th global aggregation.
	
	The global model is the aggregation of the DP-perturbed local models.
	At the $m$-th aggregation, the global model parameter is  
	\begin{equation}\label{eq_global_weight}
	\bm{\omega}(m) = \sum_{k \in {\cal K}} p_k \tilde{\bm{\omega}}_k(\tau m).
	\end{equation}
	The parameter server broadcasts $\bm{\omega}(m)$ to all users. The users start the next training iteration, i.e., $t=\tau m +1$, by setting ${\bm{\omega}}_k(t)={\bm{\omega}}(m)$.
	By executing Algorithm~\ref{algo_DGD} for $M$ global model aggregations (or $T$ iterations), we obtain the optimal model parameter $\bm{\omega}^*$ achieving the minimum global loss function.
	\begin{algorithm}
		\begin{algorithmic}[1]
			\caption{Distributed gradient descent FL perturbed with time-varying DP noises}
			\label{algo_DGD}
			\LinesNumbered
			\STATE {\textbf{Input:} $\tau$, $T$, $\bm{\omega}(0)$, $\epsilon$, $\delta$, and $C$.}
			\STATE {\textbf{Output:} $\bm{\omega}(T)$.}
			\STATE {Initialize: $t=0$ and $\bm{\omega}(0)$\;}
			\WHILE{$t \leq T$}
			\Statex{\% Local model update\;}
			\WHILE{$k \in {\cal K}$}
			\STATE{Update the local parameters:\\
				\qquad $~~~~\bm{\omega}_k(t+1) = {\bm{\omega}}_k(t) - \eta \nabla f_k ({\bm{\omega}}_k(t))$\;}
			\STATE{Clip the local parameters:\\
				\qquad $~~~~\bm{\omega}_k(t+1) \leftarrow \bm{\omega}_k(t+1) / \max\left(1,\frac{\left\|\bm{\omega}_k(t+1) \right\| }{C} \right)$\;}
			\ENDWHILE
			\Statex{\% Global model aggregation\;}
			\IF{$(t + 1)$ is an integer multiple of $\tau$}
			\STATE $~~~~~m = \left\lfloor\frac{t + 1}{\tau} \right\rfloor$\;
			\STATE {Produce the time-varying DP noises to perturb} 
			\STATE {the local model parameters:\\
				\qquad $~~~~~\tilde{\bm{\omega}}_k(t+1) = {\bm{\omega}}_k (t+1) + {\cal N}(0, \varTheta(m) {\bm I}),\,\forall k$\;}
			\STATE{Update the global parameters:\\
				\qquad $~~~~~\bm{\omega}(m) = \sum_{k \in {\cal K}} p_k \tilde{\bm{\omega}}_k(t+1) $\;}
			\STATE{Update the local model parameters: \\
				\qquad$~~~~~{\bm{\omega}}_k(t+1) = {\bm{\omega}}(m), \; \forall k \in {\cal K}$.}
			\ENDIF
			\STATE{$t \leftarrow t+1$;}
			\ENDWHILE
		\end{algorithmic}
	\end{algorithm}
	
	\subsection{Threat Model}
	Suppose that the parameter server is honest but curious in the considered FL system. External attackers attempt to obtain confidential information of the users. Although the users can store and train their data locally in FL, the local model updates shared between the users and server can potentially compromise the privacy of the users, e.g., under inference attacks at the learning phase~\cite{nasr2019comprehensive} and model-inversion attacks at the testing phase~\cite{fredrikson2015model}.
	The attackers can hijack the private information by analyzing the global model parameters broadcast by the parameter server.
	
	Assume that the attacker has a dataset ${\cal D}_{a}$ overlapping with user $k$'s local dataset ${\cal D}_k$ and 
	attempts to find the subset of its dataset, $ {\cal D}'_{a} \in {\cal D}_{a}$, that is 
	the most likely used for the training of user $k$'s model, i.e., $\max_{{\cal D}'_{a} \in {\cal D}_a} \Pr\left[{\cal D}'_{a} \in {\cal D}_{k} | \bm{\omega}_k \right]$.
	Here, $\Pr[{\cal D}'_{a} \in {\cal D}_{k} | {\bm{\omega}}_k]$ is the probability that the subset ${\cal D}'_{a}$ belongs to ${\cal D}_k$, given user $k$'s model $\bm{\omega}_k$.
	It is possible that an attacker has an overlapping dataset with some participating nodes in an FL setting, which is a common assumption made in the literature when analyzing FL systems, e.g.,~\cite{nasr2019comprehensive}.

	\section{Proposed DP Perturbation with Varying Noise Variance}\label{sec-problem}
	In this section, we delineate the proposed DP mechanism with time-varying noise perturbation in the considered FL system, and analyze the sensitivity and privacy of the mechanism.
	
	\subsection{Definition of DP} 
	For an $\left(\epsilon, \delta\right)$-DP mechanism, 
	the privacy is parametrized by a requirement specified using $\epsilon$ and~$\delta$. Here,
	$\epsilon > 0$ specifies the difference beyond which the outputs on two adjacent datasets ${\cal D}$ and ${\cal D}'$ can be differentiated{\footnote{Two datasets, ${\cal D}$ and ${\cal D}'$, are adjacent if ${\cal D}'$ can be built by inserting an example to, or discarding an example from, ${\cal D}$.}}. $\delta$ is the probability with which the ratio between the probabilities of ${\cal D}$ and ${\cal D}'$ after DP noises are added is no smaller than $\exp(\epsilon)$~\cite{mcsherry2007mechanism}. The definition of the $\left( \epsilon, \delta\right)$-DP is provided below.
	
	\begin{definition}[$\left( \epsilon, \delta\right)$-DP~\cite{Abadi2016Deep}]
		A random mechanism $\cal M$: ${\cal X}\to {\cal R}$ that has a domain of $\cal X$ and a range of $\cal R$ meets $\left( \epsilon, \delta\right)$-DP, as long as 
		\begin{equation}
		\Pr \left[{\cal M}(\cal D) \in {\cal S} \right] \leq e^{\epsilon} \Pr \left[{\cal M}({\cal D}')  \in {\cal S}\right] + \delta,
		\end{equation}
		for a measurable set ${\cal S} \subseteq {\cal R}$ and adjacent datasets ${\cal D}, {\cal D}' \in {\cal X}$.	
	\end{definition}
	
	\subsection{Proposed Time-varying Perturbation Noise Variance } 
	
	We propose that the variance of the DP noise, i.e., $\varTheta(m)$, added to the local model parameters changes (increases or decreases), with the increasing number of global aggregations. 
	On the one hand, for a required privacy level (e.g., $(\epsilon, \delta)$ in the context of DP), adding a smaller or stronger noise at the beginning of the training can speed up convergence~\cite{cheng2022differentially}.
	On the other hand, more global aggregations in the training process result in worse privacy leakage since an adversary can observe more information exposed in the global aggregations and related to the training datasets, according to~\cite{Wei2020Federated}. 
	
	We design that $\varTheta(m)$ is a geometric series, and the noise variance added in the $m$-th global aggregation is calculated as 
	\begin{equation}
	\varTheta (m) = \vartheta^{m -1}\sigma^2,\; m = 1,\cdots,M,
	\end{equation}
	where $\sigma^2$ is the initial noise variance; $\vartheta>0$ is the scaling factor of the series. 
	When $\vartheta = 1$, the noise variance $\varTheta (m)$ remains unchanged, as it is in the existing DP schemes~\cite{Wei2020Federated}.
	
	Apart from the specific global model aggregation $m$, the DP noise variance $\varTheta(m)$ depends on the privacy level $(\epsilon, \delta)$, the ratio of participating users $q$, the number of local update iterations between two global model aggregations $\tau$, and the iteration number $T$ (or, in other words, the number of global model aggregations $M$).
	It is of practical interest to determine the optimal values of $M$ and $\tau$, or their trade-off. 

	\subsection{Sensitivity and Privacy Analysis}\label{sec-privacy} 
	It is prudent to analyze the sensitivity and privacy performance of the proposed DP perturbation with time-varying noise variance $\varTheta(m)$. We use the $\ell_2$-norm to measure the sensitivity~\cite{dwork2014algorithmic}
	\begin{equation}\label{eq_sensitivity}
	\Delta s = \max_{{\cal D}, {\cal D}'} \left\| s({\cal D}) - s({\cal D}')\right\|,
	\end{equation}
	where $s(\cdot)$ is a general function in ${\cal D}$ (and ${\cal D}'$).	
	Accordingly, if the batch size for training the local model is consistent with the training sample number, the sensitivity is given by $\Delta s = \frac{2 C}{\left|{{\cal D}_k} \right|}$ with $C$ being the pre-determined clipping threshold and ${\cal D}_k$ being the local dataset at the $k$-th user~\cite{Wei2020Federated}.

	Given the sensitivity $\Delta s$, the amplitude (i.e., the standard deviation) of the noise injected into each global aggregation changes exponentially when $\vartheta \neq 1$. We derive the varying amplitude to meet the privacy requirement in \textbf{Theorem~\ref{theo_DP noise}}.
	\begin{theorem}\label{theo_DP noise}	
		To ensure the $(\epsilon,\delta)$-DP requirement of the local training dataset with $M$ global model aggregations, the amplitude of the DP noise in the first global model aggregation of the time-varying DP perturbation mechanism is given by
		\begin{equation}\label{eq_DP noise}	
		\sigma =  \frac{\Delta s }{\epsilon} \sqrt{2q\left( \frac{\vartheta - \vartheta^{1-M}}{\vartheta -1}\right)\ln\left(\frac{1}{\delta} \right) },\;{\text{if}}~\vartheta \neq 1.
		\end{equation}
	\end{theorem}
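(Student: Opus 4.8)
The plan is to build on the standard Gaussian-mechanism privacy guarantee together with the moments-accountant composition of \cite{Abadi2016Deep, Wei2020Federated}, and then specialize the accumulated privacy loss to the geometric noise schedule $\varTheta(m)=\vartheta^{m-1}\sigma^2$. Since noise is injected only at the $M$ global aggregations, the mechanism $\cal M$ is the composition of $M$ subsampled Gaussian mechanisms, the $m$-th having subsampling ratio $q$, $\ell_2$-sensitivity $\Delta s$, and variance $\varTheta(m)$. For each such mechanism the log-moment of the privacy-loss variable is, to leading order, linear in $1/\varTheta(m)$; this is the single-step ingredient I would quote from the sensitivity result $\Delta s=2C/|{\cal D}_k|$ recorded above and the subsampled-Gaussian bound of \cite{Abadi2016Deep}.

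First I would invoke the composability of the moments accountant: the log-moment $\alpha_{\cal M}(\lambda)$ of $\cal M$ at order $\lambda$ is upper-bounded by the sum of the per-aggregation log-moments. Converting the accumulated log-moment back to an $(\epsilon,\delta)$ statement through the tail bound $\delta=\min_{\lambda}\exp(\alpha_{\cal M}(\lambda)-\lambda\epsilon)$ and optimizing over $\lambda$ yields the relation
\begin{equation}\label{eq_proof_budget}
\epsilon = \Delta s\,\sqrt{2q\ln\!\left(\tfrac{1}{\delta}\right)\sum_{m=1}^{M}\frac{1}{\varTheta(m)}}.
\end{equation}
As a consistency check, setting $\varTheta(m)=\sigma^2$ for all $m$ (i.e.\ $\vartheta=1$) collapses the sum to $M/\sigma^2$ and recovers the persistent-variance formula $\sigma=(\Delta s/\epsilon)\sqrt{2qM\ln(1/\delta)}$ of \cite{Wei2020Federated}, which is exactly the special case the new mechanism must reduce to.

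The remaining step is to evaluate the sum in \eqref{eq_proof_budget} for the geometric schedule. Substituting $\varTheta(m)=\vartheta^{m-1}\sigma^2$ factors out $1/\sigma^2$ and leaves $\sum_{m=1}^{M}\vartheta^{1-m}$, a finite geometric series with ratio $1/\vartheta$. For $\vartheta\neq 1$ this equals $(\vartheta-\vartheta^{1-M})/(\vartheta-1)$, so that $\sum_{m=1}^{M}1/\varTheta(m)=\sigma^{-2}(\vartheta-\vartheta^{1-M})/(\vartheta-1)$. Inserting this into \eqref{eq_proof_budget} and solving the resulting identity for $\sigma$ gives \eqref{eq_DP noise} directly.

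I expect the main obstacle to lie not in the algebra but in justifying the \emph{term-by-term} composition with heterogeneous variances: one must verify that the subsampled-Gaussian log-moment bound applies unchanged to each aggregation despite the variance $\varTheta(m)$ changing from round to round, and that a single moment order $\lambda$ can be used to optimize the tail bound for the accumulated log-moment. Once additivity of the log-moments is secured and each contribution is confirmed to scale as $1/\varTheta(m)$, the geometric-series summation and the final inversion for $\sigma$ are routine.
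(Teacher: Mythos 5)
Your proposal follows essentially the same route as the paper's own proof: it invokes the moments-accountant composability to write $\alpha_{\mathcal M}(\lambda)\leq\sum_{m=1}^{M} q\lambda(\lambda+1)(\Delta s)^2/(2\sigma_m^2)$, sums the geometric series $\sum_{m=1}^{M}\vartheta^{1-m}=(\vartheta-\vartheta^{1-M})/(\vartheta-1)$, and converts back through the tail bound $\delta=\min_{\lambda}\exp\left(\alpha_{\mathcal M}(\lambda)-\lambda\epsilon\right)$ before solving for $\sigma$. The intermediate $\epsilon$--$\delta$ relation you posit is exactly what the paper derives by explicitly computing $\lambda^*$ and using the auxiliary inequality that lets the residual $-\epsilon/4$ term be dropped, and the heterogeneous-variance composition you flag as the main obstacle is precisely what the cited composability theorem of Abadi \emph{et al.} covers, so no gap remains.
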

	
	\begin{proof}
		See Appendix~\ref{append-A}.
	\end{proof}

	\begin{remark}\label{rema-2}
		Theorem~\ref{theo_DP noise} indicates that a larger $\sigma$ results in a smaller $\epsilon$, i.e., a stronger privacy guarantee, and confirms that a larger $M$ leads to a higher likelihood of leaking private information during training, given $\sigma$.
		Based on Theorem~\ref{theo_DP noise}, given $\epsilon$, $\delta$, and $M$, we adjust the DP noise variance to balance privacy preservation and the convergence of FL training. 
	\end{remark}
	\begin{remark}\label{rema-3}
		Given a privacy budget $\epsilon$ for $M$ global aggregations, more clients involved in the model updates, i.e., a larger $q$ in \eqref{eq_DP noise}, lead to requirements of stronger perturbation noises being added to the local model of each involved client. This indicates less privacy leakage for each client, which is consistent with the conclusion drawn in~\cite{elkordy2022much}.
	\end{remark}
	\begin{remark}\label{rema-1}
		If $\vartheta = 1$, then $\sigma = \frac{\Delta s }{\epsilon}  \sqrt{2qM\ln\left(\frac{1}{\delta} \right) }$ does not change over the global model aggregations for $m=1,\cdots,M$, since $\frac{\vartheta - \vartheta^{1-M}}{\vartheta -1}~{\xrightarrow {\vartheta \to 1} }~M$. It is consistent with the amplitude of the DP noise in a Gaussian noise perturbation mechanism developed in~\cite{Wei2020Federated}.
	\end{remark}
	
	\subsection{Online Adjustment of DP Perturbation Noise Amplitude}\label{sec: noise update} 
	The aim of the proposed time-varying DP noise perturbation, i.e.,~\eqref{eq_DP noise}, is to protect the user privacy and ensure reasonable learning performance.
	As the global aggregation increases, however, the noise added in the late stage of the model training may degrade the learning performance. 
	
	To address this issue without compromising the $\left( \epsilon, \delta\right)$-DP privacy level, we can reduce the maximum number of global aggregations, $M$, and accordingly reduce the noise variance $\sigma_m^2$ at the $m$-th global aggregation. 
	Here, $\sigma'=\sigma'(M',m)$ depends on $M'$ and $m$, as given below.

	\begin{theorem}\label{theorem-noise update}
		To reduce the loss of learning at the $m$-th global aggregation without compromising the $\left( \epsilon, \delta\right)$-DP privacy of the learning, we update the maximum number of global aggregations to $M'$ ($M'<M$) and the variance of the perturbation noise to $\sigma_m^2=\vartheta^{m-1}(\sigma')^2$ with $\sigma'$ given by
		\begin{equation}\label{eq_DP noise re}
		\begin{aligned}
		\sigma' \! = \!\left\{ 
		\begin{aligned}
		&\!\frac{\Delta s}{\epsilon} \!\sqrt{2q\!\left(\!\frac{\vartheta - \vartheta^{1-m}}{\vartheta-1} + M'-m\!\right)\!\ln\!\left(\frac{1}{\delta} \right)\! },\,{{\rm if}~\vartheta > 1};\\
		&\!\frac{\Delta s}{\epsilon} \!\sqrt{2q M'\ln\left(\frac{1}{\delta} \right)},\,{{\rm if}~\vartheta = 1};\\
		&\!\frac{\Delta s}{\epsilon} \!\sqrt{2q\!\left(\!\frac{ \vartheta^{1-m} -\vartheta + \vartheta^{m-M'}}{1-\vartheta} \!\right)\!\ln\!\left(\frac{1}{\delta} \right)\! },\,{{\rm if}~\vartheta < 1}.
		\end{aligned}\right.
		\end{aligned}	
		\end{equation}
	\end{theorem}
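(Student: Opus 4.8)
The plan is to reduce the statement to the same privacy-budget accounting that underlies Theorem~\ref{theo_DP noise}. That theorem is equivalent to the identity that, for a sequence of Gaussian-perturbed aggregations with per-aggregation variances $\{\varTheta(j)\}$, the $(\epsilon,\delta)$-DP guarantee is governed by the squared privacy loss, which decomposes additively as
\[
\epsilon^2 = 2q(\Delta s)^2\ln\!\left(\tfrac{1}{\delta}\right)\sum_{j}\frac{1}{\varTheta(j)},
\]
i.e.\ the cost contributed by aggregation $j$ is inversely proportional to its noise variance. Substituting the geometric schedule $\varTheta(j)=\vartheta^{\,j-1}\sigma^2$ together with $\sum_{j=1}^{M}\vartheta^{-(j-1)}=\frac{\vartheta-\vartheta^{1-M}}{\vartheta-1}$ recovers \eqref{eq_DP noise} exactly, so the first thing I would do is re-derive this identity in a form that can be reused verbatim for the adjusted schedule.

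Next I would write down the post-adjustment variance allocation. At the $m$-th aggregation the horizon is shortened to $M'<M$ and the series is re-based to $(\sigma')^2$, so the first $m$ aggregations retain the geometric form $\vartheta^{\,j-1}(\sigma')^2$ while the remaining $M'-m$ aggregations receive the curbed variance introduced to stop the late-stage noise from degrading utility. Inserting this allocation into the budget identity and splitting the finite geometric sum into its head $\sum_{j=1}^{m}\vartheta^{-(j-1)}=\frac{\vartheta-\vartheta^{1-m}}{\vartheta-1}$ and its tail yields a single equation in $(\sigma')^2$, whose solution is the stated expression. The three branches of \eqref{eq_DP noise re} then fall out of the geometric-sum formula itself: the $\vartheta=1$ branch is the degenerate arithmetic case, for which the head collapses to $m$ and the whole sum to $M'$, recovering the constant-variance result of Theorem~\ref{theo_DP noise} with $M$ replaced by $M'$ (cf.\ Remark~\ref{rema-1}); the $\vartheta>1$ and $\vartheta<1$ branches are the same closed form written with denominator $\vartheta-1$ or $1-\vartheta$ so that the radicand remains positive.

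Finally I would verify that privacy is genuinely preserved rather than merely re-optimized for utility. The key observation is that the recomputed allocation is chosen so that the total squared loss over the $M'$ aggregations again equals $\epsilon^2$; since contracting the horizon from $M$ to $M'$ removes the cost of the discarded aggregations, the freed budget is precisely what licenses the reduced late-stage variance without violating $(\epsilon,\delta)$-DP. Because the already-executed aggregations were perturbed with at least as much noise as the re-based schedule prescribes, their actual contribution to the loss is no larger than the amount budgeted for them, so the guarantee is conservative.

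The step I expect to be the main obstacle is the privacy bookkeeping in the second paragraph: fixing the exact variance assigned to the post-adjustment aggregations in each regime of $\vartheta$ and confirming that substitution into the budget identity reproduces precisely the three closed forms in \eqref{eq_DP noise re}, all while the $(\epsilon,\delta)$ guarantee is demonstrably maintained. Once the tail of the schedule is pinned down, everything that remains is routine finite-geometric-series algebra.
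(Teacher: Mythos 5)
Your overall skeleton---additive composition of per-round privacy costs proportional to $1/\varTheta(j)$ (which is exactly the moments-accountant computation behind Theorem~\ref{theo_DP noise}), a head/tail split of the $M'$-round budget at round $m$, and solving the resulting equation for $(\sigma')^2$---matches what the paper does in Appendix~\ref{appendix-theo-5}, and your conservativeness remark about the already-executed rounds makes explicit something the paper leaves implicit. However, the step you deferred as ``routine finite-geometric-series algebra'' is precisely where the proposal fails, and it is not routine. You claim the $\vartheta>1$ and $\vartheta<1$ branches of \eqref{eq_DP noise re} are ``the same closed form written with denominator $\vartheta-1$ or $1-\vartheta$.'' They are not. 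If the tail rounds $n=m+1,\dots,M'$ are assigned the geometric continuation $\vartheta^{n-1}(\sigma')^2$ and their cost is summed exactly, the head and tail recombine into $\sum_{n=1}^{M'}\vartheta^{-(n-1)}=\frac{\vartheta-\vartheta^{1-M'}}{\vartheta-1}$, and your budget identity simply returns Theorem~\ref{theo_DP noise} with $M$ replaced by $M'$---a different (smaller-radicand) expression than either branch of \eqref{eq_DP noise re}. For instance, with $\vartheta=1.05$, $m=5$, $M'=10$, the exact sum gives $\approx 8.1$, whereas the theorem's bracket $\frac{\vartheta-\vartheta^{1-m}}{\vartheta-1}+M'-m$ gives $\approx 9.5$.

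The missing idea is that the two nontrivial branches encode two \emph{different, regime-dependent upper bounds} on the tail cost, not one exact sum. For $\vartheta>1$, the paper bounds each future round's cost by that of the base variance, $1/\sigma_n^2\leq 1/(\sigma')^2$ (valid because $\vartheta^{n-1}\geq 1$), so the tail is budgeted as $M'-m$ rounds at cost $1/(\sigma')^2$, producing the additive $M'-m$ term; for $\vartheta<1$ this bound is unavailable (future per-round costs grow geometrically), and the paper instead bounds the tail sum by $\frac{\vartheta^{m-M'}}{(1-\vartheta)(\sigma')^2}$. Only after fixing these two inequalities does the tail-bound optimization over the moment order $\lambda$ (carried out as in the proof of Theorem~\ref{theo_DP noise}, with branch-specific optimal $\lambda^*$) deliver the three closed forms. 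Since your plan leaves the post-adjustment tail allocation unspecified and asserts a single unified formula, following it through would not reproduce \eqref{eq_DP noise re} as stated. A secondary caveat: your final conservativeness argument tacitly requires $\sigma'\leq\sigma$, which is not automatic under the $+(M'-m)$ budgeting when $M'$ is close to $M$, so it should be imposed as a condition rather than asserted as a consequence.
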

	\begin{proof}
		See Appendix~\ref{appendix-theo-5}.
	\end{proof}

	Ideally, $M$ (or $M'$) should be as large as possible to improve the accuracy of FL training. In this sense, the online adjustment of the DP noise variance is of practical interest, as it allows for a progressive increase in $M'$ adapting to the convergence process of FL and the remaining privacy budget. This is attractive for applications that require a balance between accuracy and privacy.
	
	From Theorem \ref{theorem-noise update}, it is important to specify $M'$ and then~$\sigma'$ based on~\eqref{eq_DP noise re}. 
	In practice, the aggregator may have part of the dataset for testing purpose. If the test loss function value at the aggregator stops decreasing, then a new $M'$ is calculated.
	An empirical approach is to set $M'=\lceil\alpha_dM\rceil$ at the next global aggregation, whenever it is observed at a global aggregation that the global loss function stops decreasing. $\alpha_d$ can be empirically determined. $\lceil\cdot\rceil$ stands for ceiling.

	\section{Convergence of Privacy-Preserving FL under DP with Time-varying Noise Amplitude}\label{sec-convergence analysis}
	In this section, we establish the convergence upper bound for privacy-preserving FL (Algorithm~\ref{algo_DGD}) protected by the proposed DP mechanism with time-varying noise amplitudes. 
	
	\subsection{Definitions and Assumptions}
	We provide the following definitions and assumptions to facilitate analyzing the convergence of the FL under the new DP mechanism with time-varying perturbation noise variance.
	
	\begin{definition}[$\cal B$-local Dissimilarity]\label{def-dissimilarity}
		The local loss functions $f_k({\bm \omega}),\,k=1,\cdots, K$, yield $\cal B$-local dissimilarity at ${\bm \omega}$ if $\mathbb{E}_{{\cal D}_k} [\left\| \nabla f_k({\bm \omega})\right\|^2 ] \leq {\cal B}^2 \left\|  \nabla  F({\bm \omega}) \right\|^2$, where $\mathbb E_{{\cal D}_k}[\cdot]$ takes expectation over the distribution of user $k$'s dataset, ${\cal D}_k$, with the probability of ${\cal D}_k$ given by $p_k = {|{\cal D}_k|}/{\sum_{k\in {\cal K}}|{\cal D}_k|}$ and $\sum_{k \in {\cal K}} p_k =1$. If $\left\| \nabla F({\bm \omega}) \right\| \neq 0$, we define ${\cal B}({\bm \omega}) = \sqrt{{\mathbb{E}_k[\left\| \nabla f_k({\bm \omega})\right\|^2 ]}/{\left\| \nabla F({\bm \omega}) \right\|^2}}$.
	\end{definition}
	We refer to the gap in the gradient between the local and global loss functions as ``gradient divergence''. The gradients depend on the partition of data among the users.
	
	\begin{definition}[Gradient Divergence]\label{def-divergence}
		$\forall k $ and $\bm \omega$, $\gamma_k$ denotes an upper bound of the gradient divergence between the local and global loss functions, i.e.,
		$\left\|\nabla f_k({\bm \omega})-\nabla F({\bm \omega}) \right\| \leq \gamma_k$.
		The global gradient divergence is $\gamma \triangleq  \sum_{k} p_k \gamma_k = \frac{\sum_{k}|{\cal D}_k| \gamma_k}{|{\cal D}|}$.
	\end{definition}

	\begin{assumption}\label{assumption}
		$\forall k \in {\cal K}$, we make the following assumptions:
		\begin{enumerate}
			\item $f_k(\bm \omega)$ 
			is convex and
			$L$-smooth~\cite{o2006metric}, that is, $\left\|\nabla f_k({\bm \omega})-\nabla f_k({\bm \omega''}) \right\| \leq L \left\| {\bm \omega} - {\bm \omega''}\right\|,\,\forall {\bm \omega},{\bm \omega''} $, with $L$ being a constant depending on the loss function;
			
			\item $f_k(\bm \omega)$ is $L_c$-Lipschitz continuous, that is, $\left\| f_k({\bm \omega}) - f_k({\bm \omega''}) \right\| \leq L_c \left\| {\bm \omega} - {\bm \omega''}\right\|,\,\forall {\bm \omega},{\bm \omega''}$;

			\item The learning rate is $\eta \leq \frac{1}{L}$;
			\item $f_k(\bm \omega)$ fulfills the Polyak-Lojasiewicz requirement~\cite{karimi2016linear} with a positive parameter $\rho$, indicating that $F(\bm \omega)-F({\bm \omega}^*) \leq \frac{1}{2\rho} \left\| \nabla F(\bm \omega) \right\|^2 $ and ${\bm \omega}^*$ minimizes $F(\bm \omega)$;
			\item $F({\bm \omega}(0)) - F({\bm \omega}^*) 
			=  \Theta$, where $\Theta$ is a constant.
		\end{enumerate}
	\end{assumption}

	\subsection{Convergence Analysis}
	
	To analytically study the convergence of Algorithm~\ref{algo_DGD}, a corresponding centralized gradient descent-based learning process is typically considered, as given by 
	\begin{equation}
	{\bm \omega'}(t) = {{\bm \omega'}}(t-1)- \eta \nabla F({\bm \omega'}(t-1)).
	\end{equation}
	The model parameter ${\bm \omega'}(t)$ is updated using the global loss function $F(\cdot)$ and the entire dataset $\cal D$. 
	
	The following lemma~\cite[Thm. 1]{Wang2019Adaptive} provides an upper bound for the gap between the global model parameter of the proposed FL process, i.e.,  ${\bm \omega}(m)$, and the model parameter of the centralized learning process, i.e., ${\bm \omega'}(t)$, and, in turn, an upper bound of the gap between their loss functions, $F\left( {\bm \omega}(m) \right)$ and $F({\bm \omega'}(t))$. 
	$t = 1,\cdots,T$. 
	
	\begin{lemma}{\cite[Thm. 1]{Wang2019Adaptive}}\label{theo_inter}
		For any $ t=(m-1)\tau+1, \cdots,m \tau$, the difference of the global models between the FL process with DP perturbation and the centralized learning process in (9) is upper bounded; i.e., $\| {\bm \omega}(m) - {\bm \omega'}(t) \| \leq
		{\cal H}\left( t - (m-1)\tau \right)$. Here, ${\cal H}(x) \triangleq \frac{\gamma}{L}\left( (\eta L +1)^x -1\right) - \eta \gamma x$, $x = 0,1,2,\cdots$. $\gamma$ is the average gradient divergence over the dataset. 
		Since $F(\bm \omega)$ is $L_c$-Lipschitz, the difference of the global loss functions 
		is also upper bounded, i.e., $
		\|F\left( {\bm \omega}(m) \right) - F({\bm \omega'}(t))\| \leq L_c {\cal H}\left( t - (m-1)\tau \right) $.
	\end{lemma}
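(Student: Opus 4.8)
The plan is to recognize this as the interval-wise ``distributed-versus-centralized'' tracking bound of Wang \emph{et al.} and to reprove it by a two-level recursion confined to a single synchronization interval. Fix the $m$-th interval $t\in\{(m-1)\tau,\dots,m\tau\}$ and set $x=t-(m-1)\tau$. The auxiliary centralized sequence ${\bm\omega}'$ is re-synchronized at the left endpoint, i.e.\ ${\bm\omega}'((m-1)\tau)={\bm\omega}(m-1)$, so that the distributed iterate and the centralized iterate start from the same point and $\mathcal H(0)=0$. Inside the interval I track the virtual average $\bar{\bm\omega}(t)=\sum_k p_k{\bm\omega}_k(t)$, which coincides with the aggregate ${\bm\omega}(m)$ at $t=m\tau$; since fresh DP noise is injected only at the aggregation instant (line~12 of Algorithm~\ref{algo_DGD}), the dynamics strictly inside the interval are the noiseless local gradient steps \eqref{eq_local_SGD}, so $\mathcal H(x)$ captures the aggregation-induced structural gap, with the terminal noise absorbed into the separate convergence theorem. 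I would first bound the per-user deviation and then lift it to the averaged iterate.

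For the first level, introduce ${\bm e}_k(t)={\bm\omega}_k(t)-{\bm\omega}'(t)$ with ${\bm e}_k((m-1)\tau)=0$. Subtracting the centralized recursion from \eqref{eq_local_SGD} gives ${\bm e}_k(t+1)={\bm e}_k(t)-\eta(\nabla f_k({\bm\omega}_k(t))-\nabla F({\bm\omega}'(t)))$, and I split the bracket as $(\nabla f_k({\bm\omega}_k(t))-\nabla f_k({\bm\omega}'(t)))+(\nabla f_k({\bm\omega}'(t))-\nabla F({\bm\omega}'(t)))$. The first summand is at most $L\|{\bm e}_k(t)\|$ by the $L$-smoothness in Assumption~\ref{assumption}, and the second is at most $\gamma_k$ by Definition~\ref{def-divergence}. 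This yields the affine recursion $\|{\bm e}_k(t+1)\|\le(1+\eta L)\|{\bm e}_k(t)\|+\eta\gamma_k$, whose solution from the zero initial value is the geometric sum $\|{\bm e}_k(t)\|\le\frac{\gamma_k}{L}((1+\eta L)^x-1)$.

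For the second level, set ${\bm e}(t)=\bar{\bm\omega}(t)-{\bm\omega}'(t)$. The crucial structural fact is $\nabla F=\sum_k p_k\nabla f_k$, so the divergence terms cancel in the average and the recursion reduces to ${\bm e}(t+1)={\bm e}(t)-\eta\sum_k p_k(\nabla f_k({\bm\omega}_k(t))-\nabla f_k({\bm\omega}'(t)))$; bounding each summand by $L\|{\bm e}_k(t)\|$ and inserting the first-level estimate gives $\|{\bm e}(t+1)\|\le\|{\bm e}(t)\|+\eta\gamma((1+\eta L)^x-1)$ with $\gamma=\sum_k p_k\gamma_k$. Summing over the interval from the zero initial value telescopes to $\|{\bm e}(t)\|\le\frac{\gamma}{L}((1+\eta L)^x-1)-\eta\gamma x=\mathcal H(x)$, where the linear correction $-\eta\gamma x$ is precisely the accumulation of the ``$-1$'' terms. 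Finally, since $F$ inherits $L_c$-Lipschitz continuity from Assumption~\ref{assumption} as a convex combination of the $f_k$, I apply $\|F(\bar{\bm\omega}(t))-F({\bm\omega}'(t))\|\le L_c\|{\bm e}(t)\|$ to obtain the loss bound. The main obstacle is the two-level bookkeeping: a naive single recursion on $\|{\bm e}(t)\|$ would re-inject $\eta\gamma$ at every step and lose the $-\eta\gamma x$ sharpening, so the argument hinges on noticing that the gradient divergence cancels in the average and that only the initially vanishing per-user spread drives the averaged gap.
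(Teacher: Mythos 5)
Your proof is correct and is essentially the same argument as the one the paper relies on: the paper does not prove this lemma itself but cites it from Wang \emph{et al.}, and your two-level recursion---the per-user bound $\|\bm{e}_k(t)\|\le\frac{\gamma_k}{L}\left((1+\eta L)^x-1\right)$, followed by cancellation of the gradient-divergence terms in the average and telescoping, which is exactly what produces the $-\eta\gamma x$ correction---is precisely the proof of that cited theorem. Your reading of the loosely written $\bm{\omega}(m)$ as the noiseless virtual average $\sum_k p_k\bm{\omega}_k(t)$, with the DP noise deferred to the separate convergence analysis in Theorem~\ref{theo_convergence bound}, is also the interpretation consistent with how the lemma is actually used in Corollary~\ref{theo_convergence bound 2}.
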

	
	By Lemma~\ref{theo_inter}, Definitions~\ref{def-dissimilarity} and~\ref{def-divergence}, and Assumption~\ref{assumption},
	we develop the following theorem to analyze the convergence bound of the gap between ${\bm \omega'}(t)$ and ${\bm \omega}^*$, $t\!=\!(m-1)\tau\!+\!1,\!\cdots\!,m\tau$. 
	\begin{theorem}\label{theo_convergence bound}
		To satisfy the $(\epsilon, \delta)$-DP, the convergence upper bound of the FL under time-varying DP noise perturbation, i.e., Algorithm~\ref{algo_DGD}, after $m$ global aggregation rounds, is obtained as
		\begin{equation}\label{eq_convergnece_bound 0}
		\begin{aligned}
		F({\bm \omega'}(t) )& - F({\bm \omega}^*) \leq {\cal A}^m \Theta +\\
		&  \frac{q L (\Delta s)^2\ln\left(\frac{1}{\delta} \right)\left(\vartheta^m - {\cal A}^m\right)\left( \vartheta-\vartheta^{1-m}\right) }{\epsilon^2 \left(\vartheta - {\cal A}\right) \left(U - 1 \right)},
		\end{aligned}
		\end{equation}	
		where ${\cal A} = 1+2\rho \phi $, and $\phi \triangleq \frac{\eta^2 L}{2}\left( \frac{(U-K){\cal B}^2}{K(U-1)} + \frac{K-1}{UK(U-1)} \right) - \eta$.
	\end{theorem}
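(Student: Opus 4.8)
The plan is to reduce Theorem~\ref{theo_convergence bound} to a single one-step recursion for the expected optimality gap, unroll it over the $m$ aggregation rounds into a geometric series, and finally substitute the noise amplitude $\sigma$ from Theorem~\ref{theo_DP noise} to recast everything in terms of the $(\epsilon,\delta)$ budget. The contraction factor of the recursion will turn out to be exactly $\mathcal{A}=1+2\rho\phi$, and the additive term at each round will be proportional to the DP noise variance $\varTheta(m)=\vartheta^{m-1}\sigma^2$; these two facts are what generate the factors $\mathcal{A}^m\Theta$ and $\tfrac{\vartheta^m-\mathcal{A}^m}{\vartheta-\mathcal{A}}$ in \eqref{eq_convergnece_bound 0}.

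First I would derive the one-step recursion on $\mathbb{E}[F(\bm{\omega}(m))-F(\bm{\omega}^*)]$. Writing the aggregate as $\bm{\omega}(m)=\sum_{k\in\mathcal{K}}p_k\bm{\omega}_k(\tau m)+\bm{\xi}(m)$, where $\bm{\xi}(m)=\sum_{k\in\mathcal{K}}p_k\,\bm{n}_k$ with $\bm{n}_k\sim\mathcal{N}(0,\varTheta(m)\bm{I})$ is the zero-mean aggregated DP noise, I apply the $L$-smoothness of $F$ and take expectation over $\bm{\xi}(m)$. The first-order term vanishes, leaving a quadratic term $\tfrac{L}{2}\mathbb{E}\|\bm{\xi}(m)\|^2\propto\varTheta(m)$, which is the source of the additive noise contribution. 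The remaining deterministic part $F\big(\sum_{k}p_k\bm{\omega}_k(\tau m)\big)-F(\bm{\omega}^*)$ is controlled by a descent argument over the $\tau$ local steps combined with the random selection of $K$ out of $U$ users. Bounding the second moment of the sampled aggregate gradient through the $\mathcal{B}$-local dissimilarity of Definition~\ref{def-dissimilarity} and the finite-population variance of sampling without replacement produces the coefficient $\phi$ multiplying $\|\nabla F\|^2$, with its two pieces $\tfrac{(U-K)\mathcal{B}^2}{K(U-1)}$ and $\tfrac{K-1}{UK(U-1)}$ arising respectively from the between-user gradient spread and the residual selection variance. Invoking the Polyak--Lojasiewicz condition in Assumption~\ref{assumption}, i.e. $\|\nabla F(\bm{\omega})\|^2\ge 2\rho(F(\bm{\omega})-F(\bm{\omega}^*))$, in the regime $\phi\le 0$ (secured by $\eta\le 1/L$ and a small enough sampling variance) converts this gradient-norm descent into the multiplicative factor $\mathcal{A}=1+2\rho\phi$ acting on the gap, giving $\mathbb{E}[F(\bm{\omega}(m))-F(\bm{\omega}^*)]\le \mathcal{A}\,\mathbb{E}[F(\bm{\omega}(m-1))-F(\bm{\omega}^*)]+c\,\varTheta(m)$ for a constant $c$ collecting $L$ and the aggregation weights.

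Next I would unroll this recursion from the initialization, using $F(\bm{\omega}(0))-F(\bm{\omega}^*)=\Theta$ from Assumption~\ref{assumption}, to obtain $\mathcal{A}^m\Theta+c\sum_{j=1}^{m}\mathcal{A}^{m-j}\varTheta(j)$. Since $\varTheta(j)=\vartheta^{j-1}\sigma^2$, the sum collapses to the closed form $\sum_{j=1}^{m}\mathcal{A}^{m-j}\vartheta^{j-1}=\tfrac{\vartheta^m-\mathcal{A}^m}{\vartheta-\mathcal{A}}$, which supplies the $\tfrac{\vartheta^m-\mathcal{A}^m}{\vartheta-\mathcal{A}}$ factor. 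Substituting $\sigma^2$ from Theorem~\ref{theo_DP noise} with the horizon set to the current $m$ then injects the remaining $(\vartheta-\vartheta^{1-m})$ factor together with the $q$, $(\Delta s)^2$, $\ln(1/\delta)$, and $1/\epsilon^2$ dependence, while the $1/(U-1)$ and the overall $L$ descend from the sampling constant $c$; simplifying yields the second term of \eqref{eq_convergnece_bound 0}. Finally, because the statement is phrased for the centralized reference iterate $\bm{\omega}'(t)$ with $t=(m-1)\tau+1,\cdots,m\tau$, I would transfer the recursion from $\bm{\omega}(m)$ to $\bm{\omega}'(t)$ via Lemma~\ref{theo_inter}, which (through $L_c$-Lipschitz continuity and the gradient divergence $\gamma$) controls $\|F(\bm{\omega}(m))-F(\bm{\omega}'(t))\|$ uniformly over the $m$-th round.

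The main obstacle will be the deterministic descent step, specifically pinning down the exact sampling-variance coefficients in $\phi$. Computing $\mathbb{E}\big\|\sum_{k\in\mathcal{K}}p_k\nabla f_k(\bm{\omega})-\nabla F(\bm{\omega})\big\|^2$ for $K$ users drawn without replacement, and combining it cleanly with the $\mathcal{B}$-local dissimilarity bound and the $\tau$-step accumulation, is what yields the two distinct terms rather than a single looser one; getting these constants right is the delicate part. A secondary point is verifying that $\phi\le 0$, so that $\mathcal{A}<1$ and the geometric series converges, which is precisely where the convexity and $L$-smoothness of $f_k$ together with $\eta\le 1/L$ are needed.
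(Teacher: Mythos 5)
Your proposal follows essentially the same route as the paper's proof: a one-step $L$-smoothness descent inequality, the sampling-without-replacement variance computation combined with the $\cal B$-local dissimilarity bound to produce the two terms of $\phi$, the Polyak--Lojasiewicz condition to turn the descent into the contraction factor ${\cal A}=1+2\rho\phi$, geometric unrolling of the recursion against the noise series $\vartheta^{j-1}\sigma^2$ to get $\frac{\vartheta^m-{\cal A}^m}{\vartheta-{\cal A}}$, and substitution of the $\sigma$ from Theorem~\ref{theo_DP noise}. The only cosmetic differences are that the paper runs the recursion on $\left\|\nabla F\right\|^2$ (invoking PL twice, once in each direction) rather than directly on the optimality gap, and that the paper defines ${\bm \omega'}(t)$ in its proof as the noise-perturbed aggregate itself, so the transfer via Lemma~\ref{theo_inter} that you append at the end is not needed for Theorem~\ref{theo_convergence bound} (that term appears only in Corollary~\ref{theo_convergence bound 2}).
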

	\begin{proof}
		See Appendix~\ref{appendix_convergence bound}.
	\end{proof}

	By setting $m = M$ in Theorems~\ref{theo_inter} and~\ref{theo_convergence bound}, we have the following corollary.
	
	\begin{corollary}\label{theo_convergence bound 2}
		To satisfy the $(\epsilon, \delta)$-DP, the convergence upper bound of the FL under time-varying DP noise perturbation, i.e., Algorithm~\ref{algo_DGD}, after all $M$ global aggregations (or, in other words, all $T$ iterations), is obtained as
		\begin{equation}\label{eq_convergnece_bound}
		\begin{aligned}
		& F\big({\bm \omega}({M}) \big) - F({\bm \omega}^*) \!\leq\! {\cal A}^{M} \Theta +\\
		& \;\! \frac{qL (\Delta s)^2\ln\left(\frac{1}{\delta} \right)\left(\vartheta^{M} - {\cal A}^{M}\right)\left( \vartheta-\vartheta^{1-{M}}\right) }{\epsilon^2\left(\vartheta - {\cal A}\right) \left(U - 1 \right)} \!+\! L_c {\cal H}\left(\tau\right).
		\end{aligned}
		\end{equation}	
	\end{corollary}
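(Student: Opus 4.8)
The plan is to obtain the corollary as an immediate specialization of Theorem~\ref{theo_convergence bound} and Lemma~\ref{theo_inter} at $m=M$, glued together by the triangle inequality. The quantity of interest, $F\big({\bm \omega}(M)\big)-F({\bm \omega}^*)$, is phrased in terms of the aggregated FL model ${\bm \omega}(M)$, whereas Theorem~\ref{theo_convergence bound} bounds the centralized surrogate $F\big({\bm \omega'}(t)\big)-F({\bm \omega}^*)$. I would therefore add and subtract $F\big({\bm \omega'}(t)\big)$ and write the target as $\big[F\big({\bm \omega}(M)\big)-F\big({\bm \omega'}(t)\big)\big]+\big[F\big({\bm \omega'}(t)\big)-F({\bm \omega}^*)\big]$, so that the first bracket is controlled by Lemma~\ref{theo_inter} and the second by Theorem~\ref{theo_convergence bound}.

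First I would bound the second bracket by setting $m=M$ in Theorem~\ref{theo_convergence bound}; this reproduces verbatim the first two terms on the right-hand side of~\eqref{eq_convergnece_bound}, namely ${\cal A}^{M}\Theta$ plus the privacy-dependent term, and it holds uniformly for every $t$ in the terminal window $t=(M-1)\tau+1,\cdots,M\tau$. Next I would bound the first bracket by applying Lemma~\ref{theo_inter} with $m=M$, which gives $\big\|F\big({\bm \omega}(M)\big)-F\big({\bm \omega'}(t)\big)\big\|\leq L_c\,{\cal H}\big(t-(M-1)\tau\big)$ over the same range of $t$. The decisive step is then to evaluate both bounds at the terminal iterate $t=T=M\tau$, the natural comparison point at which the FL output after $M$ aggregations and the centralized trajectory have undergone the same total number $T$ of iterations. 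This choice collapses the argument of ${\cal H}$ to $t-(M-1)\tau=\tau$, so the first bracket is bounded by $L_c\,{\cal H}(\tau)$, matching the final term of~\eqref{eq_convergnece_bound}. Dropping the absolute value, which is legitimate since only an upper bound is sought, and summing the two brackets yields the claimed inequality.

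The step deserving the most care is the indexing around $t$: the same $t$ must be used in both brackets and must lie in the window $[(M-1)\tau+1,\,M\tau]$ for both Theorem~\ref{theo_convergence bound} and Lemma~\ref{theo_inter} to apply. The terminal value $t=T$ is the admissible choice that produces exactly the argument $\tau$ inside ${\cal H}$; moreover, since ${\cal H}(x+1)-{\cal H}(x)=\gamma\eta\big((1+\eta L)^{x}-1\big)\geq0$, the function ${\cal H}$ is nondecreasing on the integers, so $t=T$ is also the worst case within the window, confirming the validity of the reported bound. Beyond this bookkeeping the argument is purely substitutional and introduces no new estimate, so I anticipate no genuine analytical obstacle—the essential work has already been carried out in establishing Theorem~\ref{theo_convergence bound}.
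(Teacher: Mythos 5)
Your proposal is correct and follows exactly the paper's route: the paper obtains the corollary precisely by setting $m=M$ in Lemma~\ref{theo_inter} and Theorem~\ref{theo_convergence bound} and combining them at the terminal iterate $t=M\tau$, so that ${\cal H}\left(t-(M-1)\tau\right)$ collapses to ${\cal H}(\tau)$; your triangle-inequality decomposition and choice of $t=T$ make explicit what the paper states in one sentence. Your additional monotonicity check on ${\cal H}$ is a small bonus the paper omits, but it does not change the argument.
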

	
	Corollary~\ref{theo_convergence bound 2} shows the trade-off between the learning performance and privacy protection level. In the case of a weak privacy guarantee (i.e., both $\epsilon$ and $\delta$ are large), the convergence upper bound is tighter since the second term on the right-hand side (RHS) of~\eqref{eq_convergnece_bound} approaches zero. 
	Additionally, the convergence upper bound in~\eqref{eq_convergnece_bound} is unrestricted by the partition of the data among the users. The data partition is captured in the gradient divergence $\gamma$, a parameter of ${\cal H}\left(\tau \right)$; see Lemma~\ref{theo_inter}. 
	Since $\frac{\partial {\cal H}\left(\tau \right)}{\partial M}=\frac{\partial {\cal H}\left(\frac{T}{M}\right)}{\partial M} = -\frac{\gamma T}{M^2}\left(\eta L +1\right)^{\frac{T}{M}} \ln\left(\eta L +1\right)  + \frac{\gamma T}{M^2} \leq 0 $ for $M \leq \frac{T}{\ln\left(\eta/\ln\left(\eta L + 1 \right) \right)}$, the upper bound becomes smaller when $M$ is larger and $\gamma$ is smaller under a given $T$. 
	
	Another interesting finding in Theorem~\ref{theo_convergence bound} is that the convergence upper bound is \emph{not} a monotonic function of either $T$ or $M$, as established in the following Corollary~\ref{rema}.
	\begin{corollary}\label{rema}
		Given $\epsilon$ and $\delta$, the convergence upper bound is convex in the number of iterations or global aggregations, i.e., $T$ or $M$, if $\vartheta \geq {\cal A}$ and $\tau \geq \frac{\ln\left(\eta/\ln\left(\eta L + 1 \right) \right)}{\ln\left(\eta L + 1 \right)}$. 	
	\end{corollary}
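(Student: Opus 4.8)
The plan is to treat the right-hand side of the bound in Corollary~\ref{theo_convergence bound 2} as a twice-differentiable function of a continuous variable $M>0$, with $\tau=T/M$, and to show that its second derivative is nonnegative under the two stated hypotheses; since $M=T/\tau$ is linear in $T$, convexity in $M$ is equivalent to convexity in $T$. Writing the bound as $G(M)={\cal A}^M\Theta+C_0(\vartheta^M-{\cal A}^M)(\vartheta-\vartheta^{1-M})+L_c{\cal H}(T/M)$ with $C_0=\frac{qL(\Delta s)^2\ln(1/\delta)}{\epsilon^2(\vartheta-{\cal A})(U-1)}$, the first step is to record the sign information: the hypothesis $\vartheta\geq{\cal A}$ makes $C_0\geq 0$, and the recursion factor ${\cal A}=1+2\rho\phi$ lies in $(0,1)$ because $\phi<0$ for $\eta\leq 1/L$ under Assumption~\ref{assumption}. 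Hence $\ln{\cal A}<0$ and all three bases $\vartheta$, ${\cal A}$, and ${\cal A}/\vartheta$ are positive.

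Next I would expand the product, using $\vartheta^M\vartheta^{1-M}=\vartheta$ and ${\cal A}^M\vartheta^{1-M}=\vartheta({\cal A}/\vartheta)^M$, to rewrite the first two terms of $G$ as a linear combination of pure exponentials, $(\Theta-C_0\vartheta){\cal A}^M+C_0\vartheta\,\vartheta^M+C_0\vartheta\,({\cal A}/\vartheta)^M-C_0\vartheta$. Differentiating twice, each exponential $a^M$ contributes its coefficient times $(\ln a)^2a^M$; the $\vartheta^M$ and $({\cal A}/\vartheta)^M$ pieces carry the positive coefficient $C_0\vartheta$ and are therefore individually convex, while the constant contributes nothing. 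The only term that can fail convexity is $(\Theta-C_0\vartheta){\cal A}^M$, whose coefficient is negative precisely when $\Theta<C_0\vartheta$.

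Controlling this single negative contribution is the step I expect to require the most care, and it is where the hypothesis $\vartheta\geq{\cal A}$ is used most sharply. Since $\Theta\geq 0$, it suffices to dominate $C_0\vartheta(\ln{\cal A})^2{\cal A}^M$ by the convex positive terms. When $\vartheta\geq 1$ the nondecaying term $C_0\vartheta(\ln\vartheta)^2\vartheta^M$ absorbs the decaying negative contribution for $M\geq 1$ because $\vartheta^M\geq{\cal A}^M$; when ${\cal A}\leq\vartheta\leq 1$ one instead compares against $C_0\vartheta(\ln({\cal A}/\vartheta))^2({\cal A}/\vartheta)^M$, reducing the claim to the elementary exponential inequality $(\ln{\cal A})^2\vartheta^M\leq(\ln{\cal A}-\ln\vartheta)^2$ over the admissible range of $M$. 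This is the genuine obstacle, as the negative exponential must be bounded uniformly rather than merely asymptotically.

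Finally, for $L_c{\cal H}(T/M)$ I would differentiate twice in $M$. With $\beta\triangleq\ln(\eta L+1)$, a short computation yields a factor of the form $\frac{\beta}{L}e^{\beta T/M}(2+\beta T/M)-2\eta$, up to the positive multiple $\gamma T/M^3$. The stated hypothesis $\tau\geq\frac{\ln(\eta/\ln(\eta L+1))}{\ln(\eta L+1)}$ is exactly the first-order condition $\frac{\partial}{\partial M}{\cal H}(T/M)\leq 0$ already established before the corollary, i.e.\ $\frac{\beta}{L}e^{\beta T/M}\geq\eta$; substituting this lower bound and using $2+\beta T/M\geq 2$ shows the factor is nonnegative, so ${\cal H}(T/M)$ is convex on the very range where it is decreasing. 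Summing the convex exponential part with this convex term gives $G''(M)\geq 0$, establishing convexity in $M$ and, equivalently, in $T$.
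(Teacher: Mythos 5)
Your overall route---differentiate the bound twice in $M$, split the pure-exponential part from the $L_c{\cal H}(T/M)$ part, and argue term-wise nonnegativity---is the same as the paper's proof in Appendix~D, and your treatment of the ${\cal H}$ term is correct: with $\beta=\ln(\eta L+1)$, the second derivative is indeed $\frac{\gamma T}{M^3}\left[\frac{\beta}{L}e^{\beta T/M}\left(2+\beta T/M\right)-2\eta\right]$, and the bound $\frac{\beta}{L}e^{\beta T/M}\geq\eta$ together with $2+\beta T/M\geq 2$ closes that piece. (One caveat: $\frac{\beta}{L}e^{\beta\tau}\geq\eta$ is equivalent to $\tau\geq\ln(\eta L/\beta)/\beta$, not to the stated $\tau\geq\ln(\eta/\beta)/\beta$; the two differ by $\ln L$. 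This discrepancy is inherited from the paper itself, whose displayed derivatives of ${\cal H}$ drop the $1/L$ factors.)

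The genuine gap is in the exponential part, exactly where you flagged it. Your argument for $\vartheta\geq 1$---that $C_0\vartheta(\ln\vartheta)^2\vartheta^M$ absorbs $C_0\vartheta(\ln{\cal A})^2{\cal A}^M$ ``because $\vartheta^M\geq{\cal A}^M$''---is a non sequitur: it ignores that $(\ln{\cal A})^2$ can dwarf $(\ln\vartheta)^2$. For $\vartheta=1.1$, ${\cal A}=0.5$, $M=1$ one has $\vartheta^M(\ln\vartheta)^2\approx 0.01$ while ${\cal A}^M(\ln{\cal A})^2\approx 0.24$. Even adding the third positive term does not rescue the claim in general: with $\vartheta=e^{0.1}$, ${\cal A}=e^{-3.9}$, $M=1$, the full combination $\vartheta^M\ln^2\vartheta-{\cal A}^M\ln^2{\cal A}+({\cal A}/\vartheta)^M\ln^2({\cal A}/\vartheta)\approx 0.011-0.308+0.293<0$. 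For ${\cal A}\leq\vartheta\leq 1$, your reduced inequality $(\ln{\cal A})^2\vartheta^M\leq(\ln{\cal A}-\ln\vartheta)^2$ is likewise false (take $\vartheta=0.99$, ${\cal A}=\vartheta^2$, $M=1$: it reads $4\times 10^{-4}\leq 10^{-4}$), and you concede you cannot prove it. So the proof does not go through. You should know, however, that this is precisely the step the paper waves through: Appendix~D simply asserts that the bracketed combination above is positive whenever $\vartheta\geq{\cal A}$, which the counterexamples show is not true term-wise. A complete proof would have to exploit the separate positive term $\Theta{\cal A}^M\ln^2{\cal A}$ and/or the ${\cal H}''$ term, i.e., a quantitative relation between $\Theta$ and the privacy constant $C_0$ (which blows up as $\vartheta\to{\cal A}$) that the corollary's hypotheses do not supply. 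In short, you correctly isolated the crux that the paper glosses over, but neither your argument nor the paper's actually closes it.
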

	\begin{proof}
		See Appendix~\ref{append_coro}.
	\end{proof}

	Note that the upper bound in \eqref{eq_convergnece_bound} may not be tight, as it is obtained through the use of the triangular inequality and Jensen's inequality, which are commonly used in the derivation of convergence upper bounds for FL models \cite{wu2021incentivizing,geyer2017differentially, lu2019differentially,bhowmick2018protection,Ma2020On,bassily2014private}. 
	Despite this, the  upper bound reveals that the optimality gap of an FL model under our proposed perturbation mechanism would decrease at first, and then increase as the number of global aggregations increases. This confirms that  FL models can diverge, rather than converge, due to noise perturbation for differential privacy, and highlights the existence of an optimal number of global aggregations; i.e., 
	$T$ or $M$ can be optimized to minimize the loss function. 
	
	Also note that \textbf{Theorem \ref{theo_convergence bound}}, and \textbf{Corollaries \ref{theo_convergence bound 2}} and \textbf{\ref{rema}} are based on the smoothness of the loss functions of the neural network models being trained, and applicable to the convergence bound analysis of FL systems training MLP and SVM models.
	On the other hand, \textbf{Theorems \ref{theo_DP noise}} and \textbf{\ref{theorem-noise update}} specify the DP noise variances to preserve the privacy of an FL training process and defer the divergence of the process caused by the DP noises. The two theorems do not rely on the smoothness of the loss functions, and are applicable to neural networks with non-smooth loss functions, e.g., CNN.  
	
	\subsection{Optimal Global Aggregation Rounds $M^*$}\label{subsec-optimal-M}
	To improve the convergence of Algorithm~\ref{algo_DGD}, we optimize $M$ to minimize the upper bound of $F({\bm \omega}(M)) - F(\bm{\omega}^*)$ for a given $T$. 
	By choosing a small enough $\eta \leq \frac{1}{L}$, Problem \eqref{eq-problem} can be rewritten as 
	\begin{equation}\label{eq_op_upper}
	\begin{aligned}
	\underset{M}{\min} & \,{\cal A}^{M} \Theta + \frac{qL(\Delta s)^2\ln\left(\frac{1}{\delta} \right)\left(\vartheta^{M} - {\cal A}^{M}\right)\left( \vartheta-\vartheta^{1-{M}}\right) }{\epsilon^2\left(\vartheta - {\cal A}\right) \left(U - 1 \right)}. \\
	\end{aligned}	
	\end{equation}
	which can be solved by setting the first-order derivative of its objective, denoted by ${\cal G}(M)$, to zero, i.e.,
	\begin{equation}\label{eq: First-order derivative}
	\begin{aligned}
	\frac{\partial {\cal G}(M)}{\partial M} = &{\Theta} {\cal A}^{M} \ln\left({\cal A}\right) + 
	\frac{qL(\Delta s)^2\ln\left(\frac{1}{\delta}\right)\vartheta}{\epsilon^2 \left(\vartheta-{\cal A} \right)\left(U-1\right)}\\
	&  \big(\vartheta^{M} \ln\left(\vartheta\right) -{\cal A}^{M} \ln\left({\cal A}\right)+\left({{\cal A}}/{\vartheta}\right)^{M}\\
	&\ln\left({{\cal A}}/{\vartheta}\right) \big)
	+ L_c \frac{\partial {\cal H}\left(\frac{T}{M}\right)}{\partial M}= 0.  
	\end{aligned}
	\end{equation}
	The optimal number of global aggregations, denoted by $M^*$, can be found numerically~\cite{boyd2004convex}, e.g., using bisection search. 
	According to Corollary~\ref{rema}, the convergence upper bound is convex  with respect to $M$, if $\vartheta \geq {\cal A}$ and $\tau \geq \frac{\ln\left(\eta/\ln\left(\eta L + 1 \right) \right)}{\ln\left(\eta L + 1 \right)}$. Then, the solution to \eqref{eq: First-order derivative} is unique and globally optimal. 
	The value of $\tau$ is determined by the ratio of $T$ to $M$.
	Here, given $T$, we determine $M$. Accordingly, $\tau=T/M$ is updated.
	
	\section{Experimental Results}\label{sec-results}
	This section assesses the accuracy of our analysis and the effectiveness of the proposed time-varying DP noise variance for distributed gradient descent-based FL under various learning tasks, models, and real-world datasets.
	
	\subsection{Experimental Settings}\label{subsec-exp-set}
	We set the number of users to $U = 100$, the number of chosen users to $K = 10$, and the number of iterations between two consecutive global aggregations to $\tau = 5$. With reference to~\cite{Wei2020Federated}, we set the clipping threshold $C = 5$, and $\delta = 0.001$. The maximum number of global aggregations is $M =30$ and the privacy protection level is $\epsilon =10$; unless otherwise specified.
	The experiments are conducted on four datasets:
	\begin{itemize}
		\item The standard MNIST dataset comprises 60,000 training and 10,000 testing examples, which are grayscale images of handwritten digits from one to ten; 
		
		\item The ADULT dataset, which contains 40,000 records extracted from census data~\cite{kennedy1995particle} and each record has up to 58 attributes, including age, education, etc.; 
		
		\item The CIFAR10 dataset, which contains 60,000  $32 \times 32$ color images in ten classes (6,000 per class), 50,000 for training and 10,000 for testing; and
		
		\item The Fashion-MNIST (FMNIST) dataset, which contains Zalando's article images (i.e., $28 \times 28$ grayscale images) in ten classes, including 60,000 examples for training and 10,000 examples for testing.  
	\end{itemize}
	We evaluate the proposed DP mechanism with time-varying perturbation noise variance  on MLP, SVM, and CNN models.

	\subsection{Evaluation of the Distributed Gradient Descent Time-Varying DP Algorithm on MLP}\label{experiment: MLP}

	The MLP is a fully-connected feedforward neural network. We consider an MLP model comprising a hidden layer and 32 hidden units, and train the model on the MNIST dataset. We adopt linear activation functions and softmax of ten classes that correspond to the ten digits. The model's error on the local training dataset is measured by a cross-entropy loss function.

	\subsubsection{Impact of Noise Scaling Factor $\vartheta$}\label{subsec-noise}
	\begin{figure}[!t]
		\centering
		\setlength{\abovecaptionskip}{0.1cm}
		\begin{subfigure}[b]{0.24\textwidth}
			\centering
			\includegraphics[width=\textwidth]{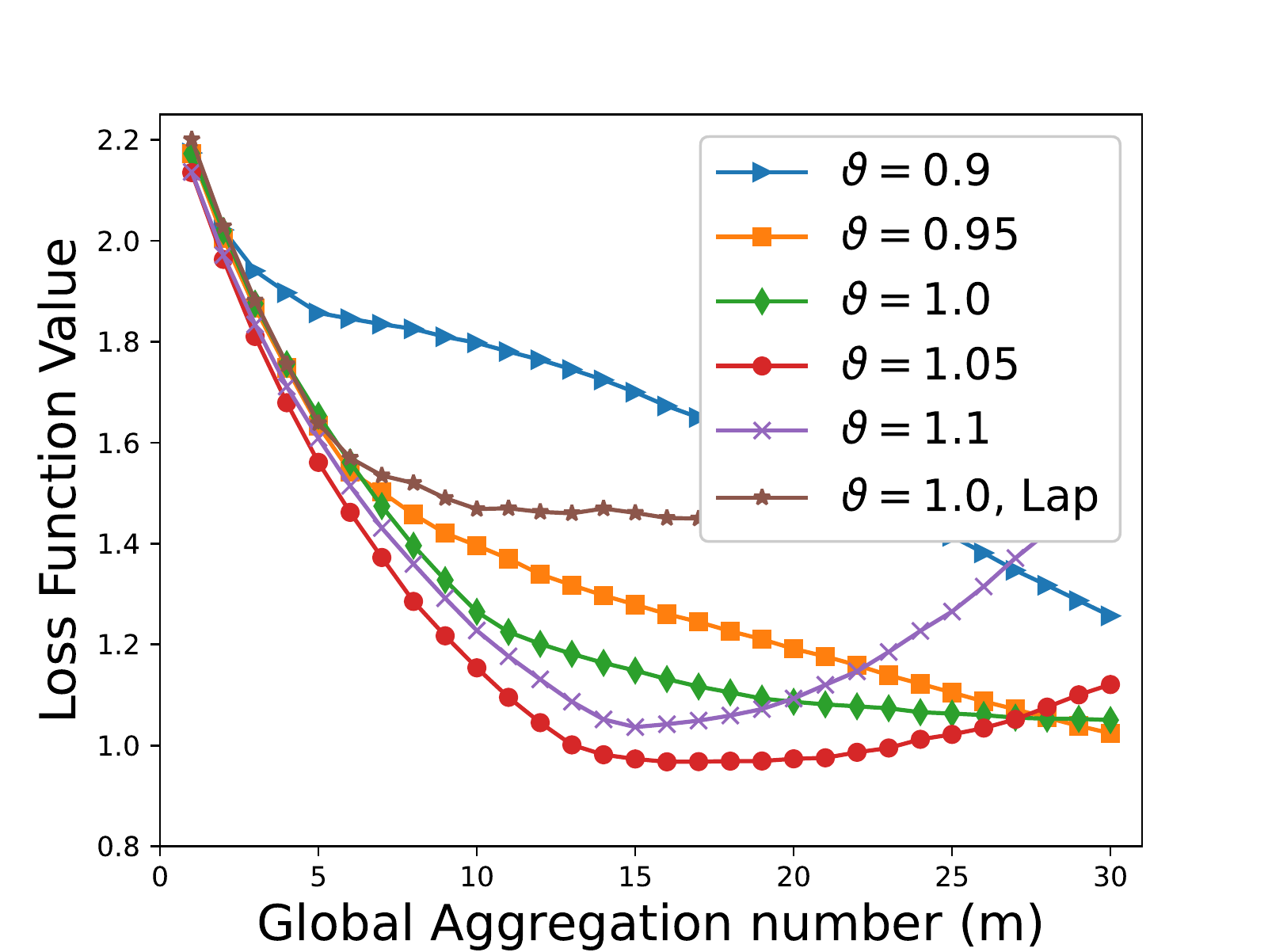}
			\caption{Loss function value vs. $m$}
			\sublabel{fig-loss-noise-m-ep5}
		\end{subfigure}
		\begin{subfigure}[b]{0.24\textwidth}
			\centering
			\includegraphics[width=\textwidth]{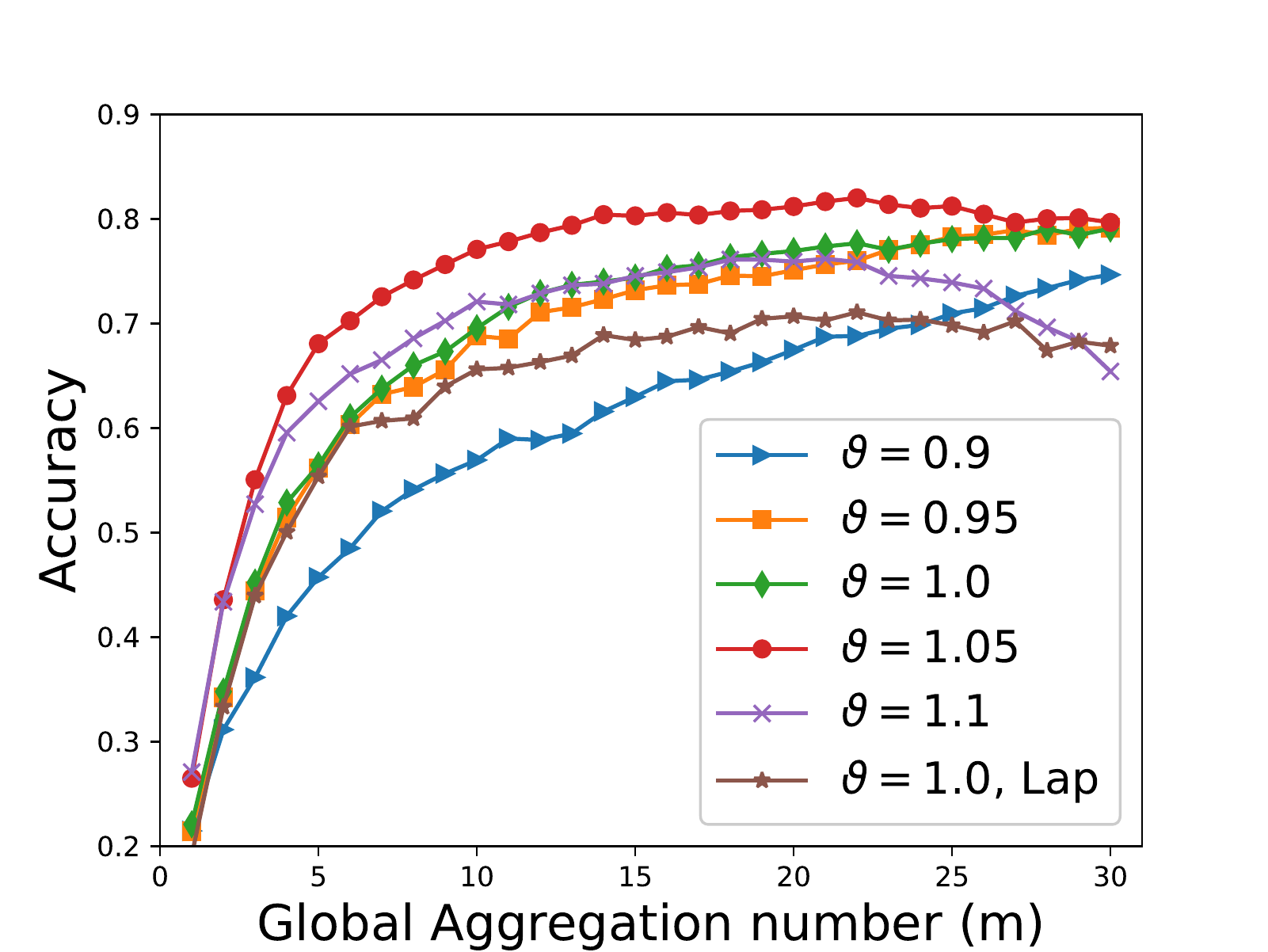}
			\caption{Accuracy vs. $m$}
			\sublabel{fig-acc-noise-m-ep5}
		\end{subfigure}
		\caption{Comparison of loss function value and accuracy of the MLP model trained on the MNIST dataset with increasing global aggregation rounds $m$ under different values of $\vartheta$ (`Lap' stands for the Laplacian mechanism).}
		\label{fig-noise-mlp-m-ep5}
	\end{figure} 
	\begin{figure}[!t]
		\centering
		\begin{subfigure}[b]{0.24\textwidth}
			\centering
			\includegraphics[width=\textwidth]{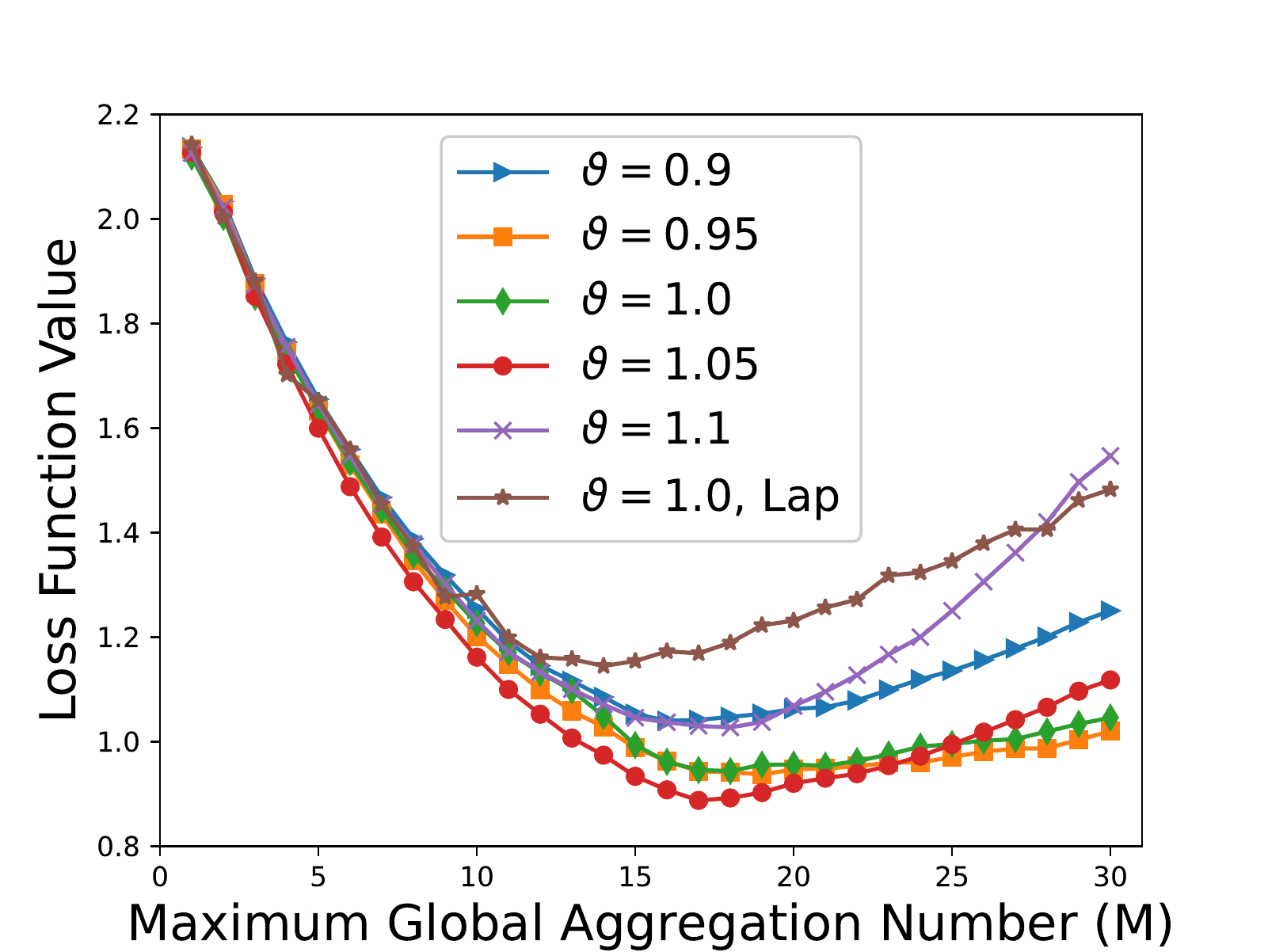}
			\caption{Loss function value vs. $M$}
			\sublabel{fig-loss-noise-M-lep5}
		\end{subfigure}
		\begin{subfigure}[b]{0.24\textwidth}
			\centering
			\includegraphics[width=\textwidth]{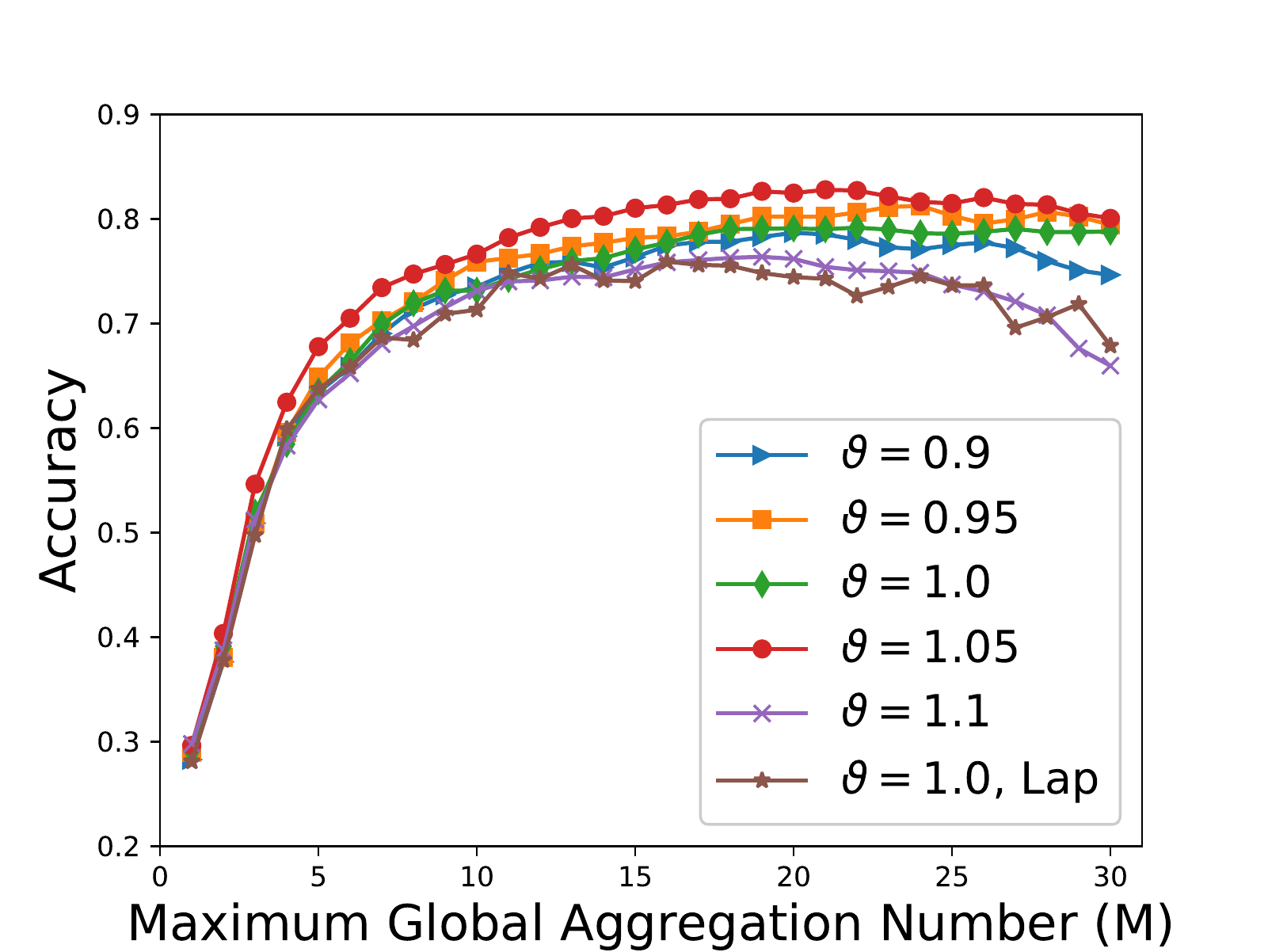}
			\caption{Accuracy vs. $M$}
			\sublabel{fig-acc-noise-M-lep5}
		\end{subfigure}
		\caption{Comparison of loss function value and accuracy of the MLP model on the MNIST dataset with respect to the maximum number of global aggregations $M$ under different values of $\vartheta$ with a fixed $\epsilon = 10$ (`Lap' stands for the Laplacian mechanism).}
		\label{fig-noise-mlp-M-lep5}
	\end{figure}

	Fig.~\ref{fig-noise-mlp-m-ep5} evaluates the impact of the proposed DP mechanism with time-varying perturbation noise variance on the convergence of FL, where the $x$-axis provides the index to the global aggregations $m$.
	Here, $m \leq M=30$, since using a value of $m$ larger than $M$ would violate the privacy requirements and render the values of the training accuracy meaningless.
	Different $\vartheta$ values are assessed, where  
	$\vartheta=1$ can be viewed as the state-of-the-art design of the DP perturbation noise developed in~\cite{Wei2020Federated}.
	Each curve in a figure corresponds to a standalone experiment of a training process given $\vartheta$ and $\epsilon$.
	
	Fig.~\ref{fig-loss-noise-m-ep5} shows that given the privacy level, the number of global aggregations needed for the (testing) loss function value to reach its minimum generally declines, as $\vartheta$ rises from 0.9 to 1.1.  
	In the case of $\vartheta > 1$, the minimum of the loss function 
	first declines and then increases.  
	This is because the noise added in the early learning stage of the learning is smaller in the case of $\vartheta > 1$ than it is in the case of $\vartheta = 1$, leading to faster convergence. 
	Moreover, the noise rises exponentially and leads to an increase in the loss function value with the growth of $m$.  
	In the case of $\vartheta < 1$, the loss function value is large in the early learning stage and declines with the growth of $m$. The loss function value also decreases, as $\vartheta$ increases from $0.9$ to $0.95$. This is because the noise added in the early stage is larger for a smaller $\vartheta$, causing slower convergence. 
	
	Fig.~\ref{fig-acc-noise-m-ep5} plots the (testing) accuracy of the proposed algorithm under different $\vartheta$ values. 
	Similar to the loss function value, given the privacy level, the aggregation numbers required to reach the maximum accuracy decrease as $\vartheta$ increases from 0.9 to 1.1.
	The accuracy is more stable and remains unchanged under a smaller $\vartheta$ value. In contrast, the accuracy quickly reaches its peak and declines slowly under a larger $\vartheta$ value since a smaller noise is added in the early learning stages of the model training process and leads to faster convergence given $\epsilon$. With the increase of global aggregations, the noises added to the model parameters grow exponentially, resulting in accuracy degradation. To this end, the value of $\vartheta$ can be adequately configured to achieve better convergence and satisfy the DP requirement in the FL process with the proposed time-varying DP perturbation noise variances.
	
	\begin{table}[!t]
		\centering
		\renewcommand\tabcolsep{2.0pt}
		\renewcommand{\arraystretch}{1.5}
		\caption{The initial amplitude of the DP noise and minimum loss function value under different values of $\vartheta$. The privacy level is $\epsilon =10$.}
		\begin{tabular}{c|c|c|c|c|c}
			\toprule[1.5pt]
			\specialrule{0em}{2pt}{2pt}
			$\vartheta$  & 0.9 & 0.95  & 1.0   & 1.05   & 1.1   \\ \hline \specialrule{0em}{2pt}{2pt}
			$\sigma$     & 0.003556 & 0.003654  & 0.003749 & 0.003841 & 0.003932 \\ \hline
			Min. loss func.  &1.03783 & 0.92706  & 0.94142  & 0.88862 & 1.01871  \\
			\bottomrule[1.5pt]
		\end{tabular}
		\label{tab-noise-std}
	\end{table}

	Fig.~\ref{fig-noise-mlp-M-lep5} evaluates the impact of the proposed DP mechanism with time-varying perturbation noise variance on the utility (i.e., the loss function and accuracy) of FL, as the total number of global aggregations, $M$, increases. 
	Each point in a figure corresponds to a standalone experiment of a training process given $\vartheta$ and $\epsilon$.
	The initial amplitude of the DP noise is calibrated for each point based on the given value of $\epsilon$; see the second row of Tab.~\ref{tab-noise-std}.
	
	Fig.~\ref{fig-loss-noise-M-lep5} shows that the loss function of the MLP exhibits convex curvature with respect to $M$, as is consistent with Corollary~\ref{rema}.
	The loss functions achieve their minimums under the optimal number of global aggregations, $M^* =17$, for all considered $\vartheta$ values, validating the result in Section~\ref{subsec-optimal-M}.
	The third row of Tab.~\ref{tab-noise-std} provides the corresponding minimum loss function values.
	Fig.~\ref{fig-loss-noise-M-lep5} and Tab.~\ref{tab-noise-std} reveal that both the time-increasing noise perturbation ($\vartheta = 1.05$) and the time-decreasing noise perturbation ($\vartheta = 0.95$) can outperform the time-invariant noise perturbation ($\vartheta = 1.0$) in terms of loss at their respective optimal numbers of global aggregations.

	Fig.~\ref{fig-acc-noise-M-lep5} plots the (testing) accuracy of the proposed algorithm against $M$, where different values of $\vartheta$ are considered. 
	Consistent with Fig.~\ref{fig-loss-noise-M-lep5}, the 
	optimal $M^*$ values achieve the best accuracy 
	in all experiments.
	We also see that when $\vartheta\neq 1$, a large number of global aggregations $M>M^*$ could overkill the learning accuracy, especially when $\vartheta$ is big, e.g., $\vartheta =1.1$.
	One can potentially select the optimal $\vartheta$ value to achieve the optimal learning accuracy.
	In the example of Fig.~\ref{fig-noise-mlp-M-lep5}, the configuration of $\vartheta \geq 1.05$ allows for the smallest loss and highest accuracy of the learning.
	As observed in Fig.~\ref{fig-noise-mlp-M-lep5}, all curves diverge, i.e., convex in Fig.~\ref{fig-noise-mlp-M-lep5}(a) and concave in Fig.~\ref{fig-noise-mlp-M-lep5}(b), even when $\vartheta =1.0$. This is because the local models are still perturbed by the DP noises with a persistent DP noise variance, when $\vartheta =1.0$.
	
	Figs.~\ref{fig-noise-mlp-m-ep5} and \ref{fig-noise-mlp-M-lep5} also compare our proposed DP mechanism with time-varying perturbation noise variance to the Laplacian mechanism (with a constant noise variance)~\cite{fu2021practicality}. Except for the DP mechanisms, all curves are under consistent experimental parameter settings in the figures.  It is observed that the FL performance is worse under the Laplacian mechanism than it is under our proposed Gaussian mechanism with a time-varying DP noise variance. This is because the Laplacian mechanism adds larger noises to the data by sampling from a Laplacian distribution, which has a higher likelihood of sampling values that are farther away from the mean.
	
	\subsubsection{Impact of Privacy Protection Level $\epsilon$}
	\begin{figure}[!t]
		\centering
		\begin{subfigure}[b]{0.24\textwidth}
			\centering
			\includegraphics[width=\textwidth]{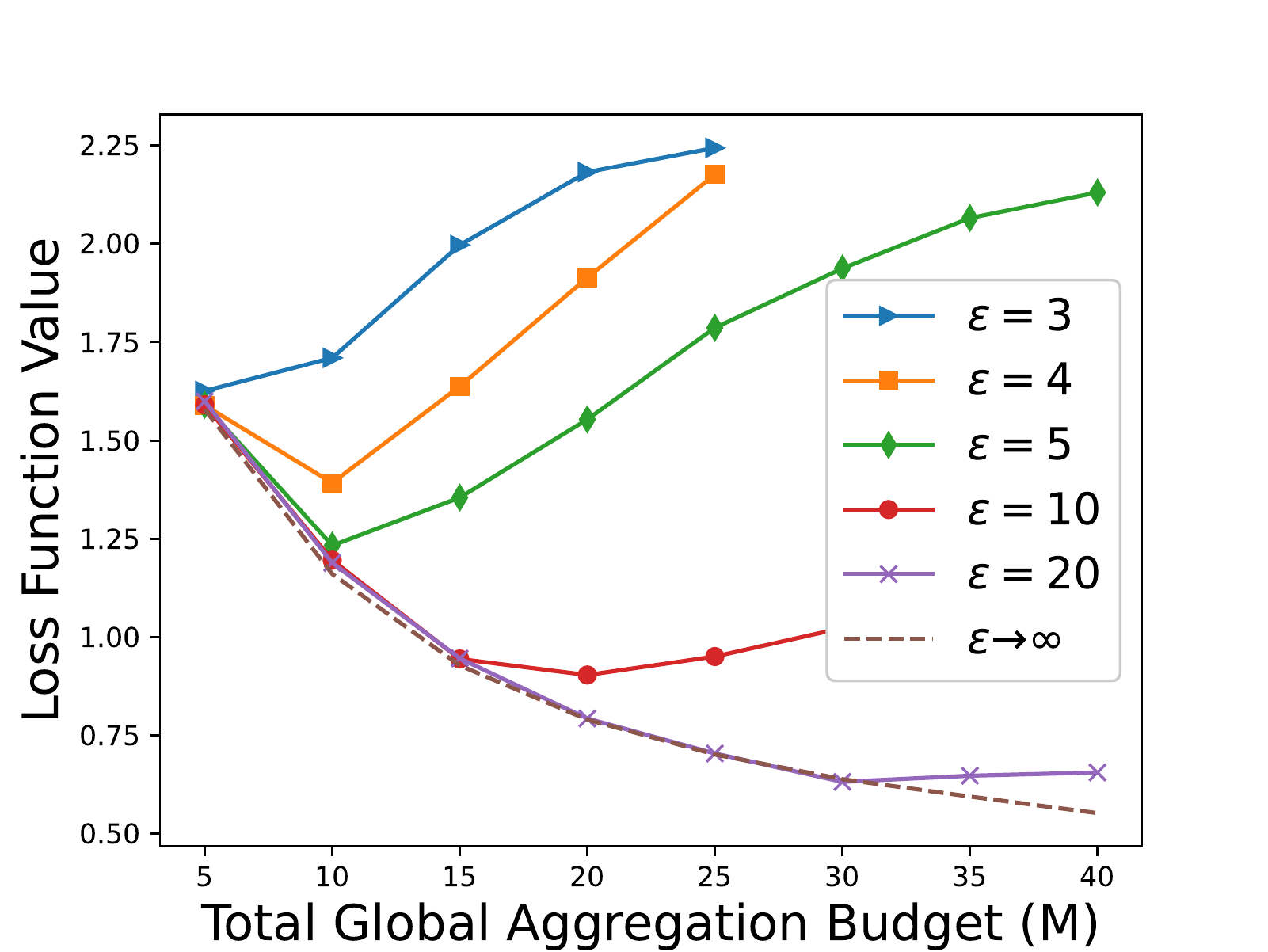}
			\caption{Loss function value vs. $M$}
			\sublabel{fig-loss-privacy-ep10}
		\end{subfigure}
		\hfill
		\begin{subfigure}[b]{0.24\textwidth}
			\centering
			\includegraphics[width=\textwidth]{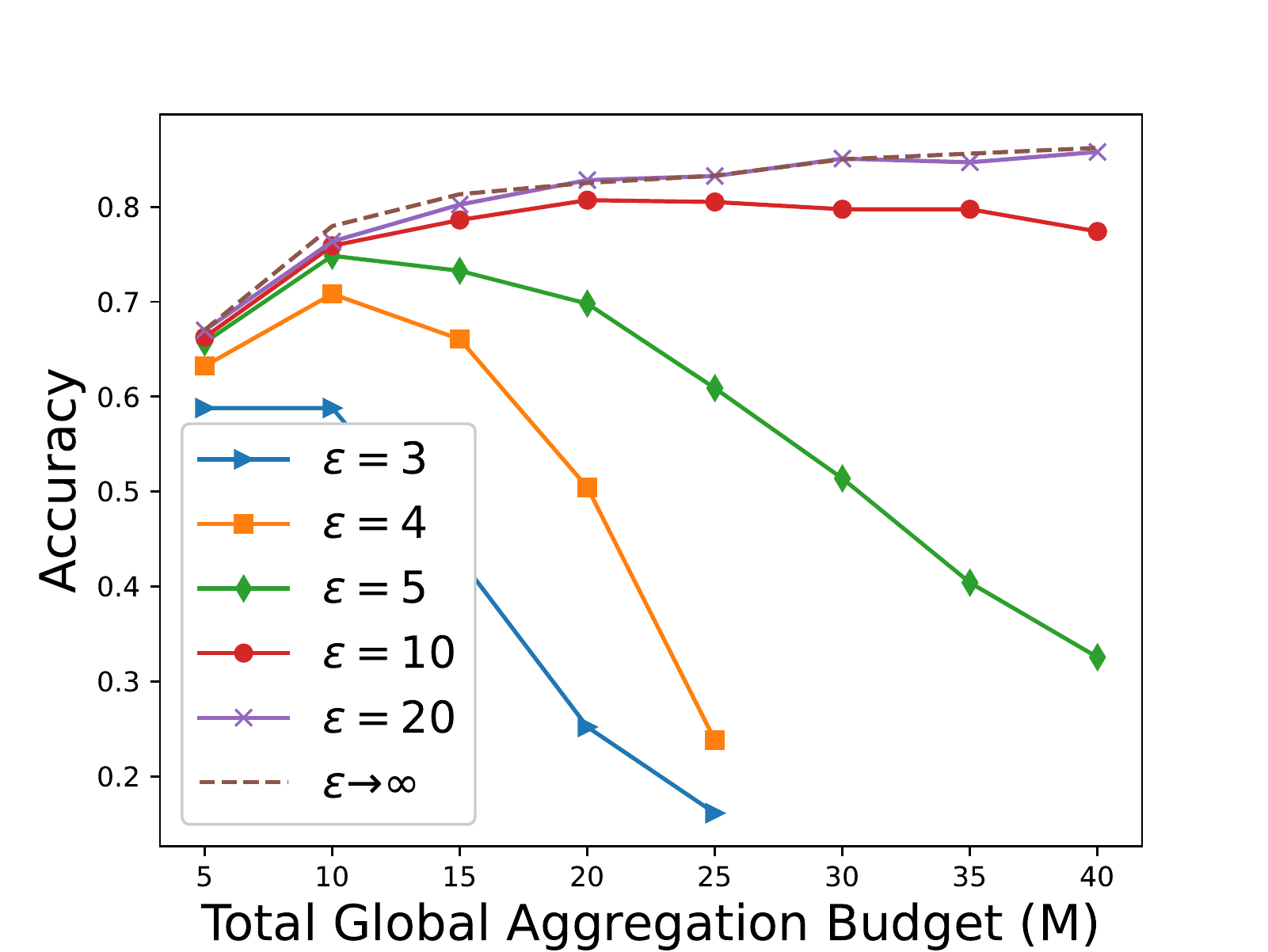}
			\caption{Accuracy vs. $M$}
			\sublabel{fig-acc-privacy-ep10}
		\end{subfigure}
		\caption{Loss function value and Accuracy of the MLP model on the MNIST dataset vs. the maximum number of global aggregations $M$ under different values of $\epsilon$, where $\vartheta = 1.05$.}
		\label{fig-privacy-mlp}
	\end{figure}

	We evaluate the impact of $\epsilon$ on the utility of the FL protected by the proposed DP mechanism with time-varying perturbation noise variance. According to Section~\ref{subsec-noise}, we set $\vartheta = 1.05$. 
	Fig.~\ref{fig-privacy-mlp} evaluates the loss  of the learning with the growth of $M$ under different settings of the privacy level $\epsilon$.
	For comparison, we also plot the case  with no DP noise perturbation, i.e., $\epsilon \to \infty$.
	Figs.~\ref{fig-loss-privacy-ep10} and~\ref{fig-acc-privacy-ep10} show that the loss function value is convex and the accuracy is concave with respect to $M$, which is in line with Corollary~\ref{rema}. 
	In general, the optimal number of global aggregations $M^*$ increases with~$\epsilon$.
	
	Fig.~\ref{fig-loss-privacy-ep10} also shows that the loss function values decrease and approach the case with no DP noise perturbation, as $\epsilon$ increases. 
	Fig.~\ref{fig-acc-privacy-ep10} shows that increasing $\epsilon$ can improve the accuracy. 
	In the case that the privacy level is larger than 20, i.e., $\epsilon \geq 20$, the convergence performance 
	approaches the case without noise perturbation. 
	This is because, with a lower privacy protection level $\epsilon$, the DP noises with a larger variance are injected in the first global aggregation according to Theorem~\ref{theo_DP noise}, resulting in larger loss function values. 
	A trade-off arises between the learning performance (i.e., loss and accuracy) and privacy level, and can be adjusted through~$M^*$.

	\subsubsection{Online Adjustment of DP Noise Variance}
	\begin{figure}[!t]
		\centering
		\includegraphics[width=1\columnwidth]{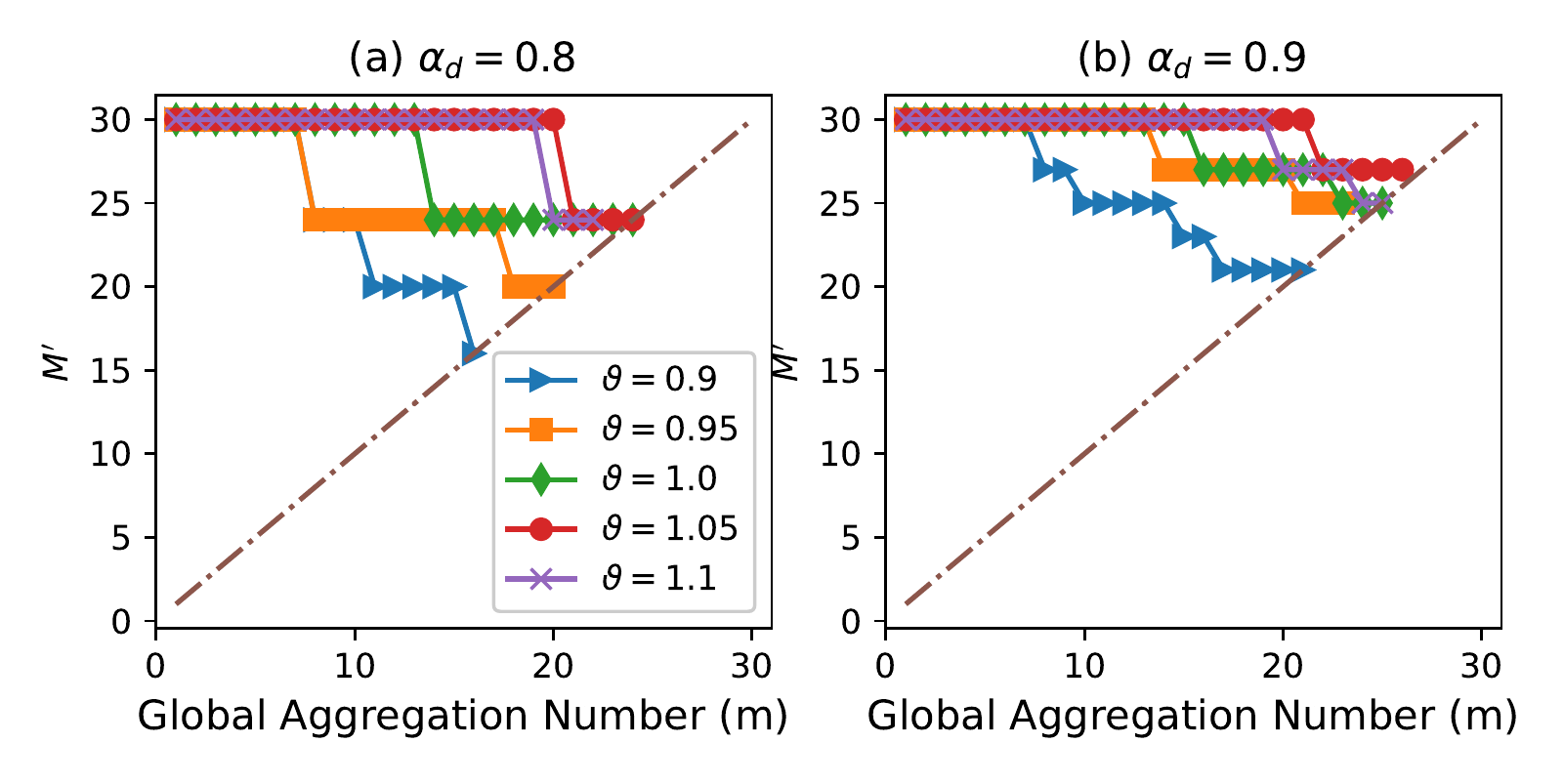}
		\caption{The updated maximum number of global aggregations $M'$ with the increasing number of global aggregations $m$, where $M'$ and $\sigma_m$ are updated according to Theorem~\ref{theorem-noise update} whenever the global loss function stops decreases.}
		\label{fig-M-update}
	\end{figure}
	\begin{figure}[!t]
		\centering
		\includegraphics[width=1\columnwidth]{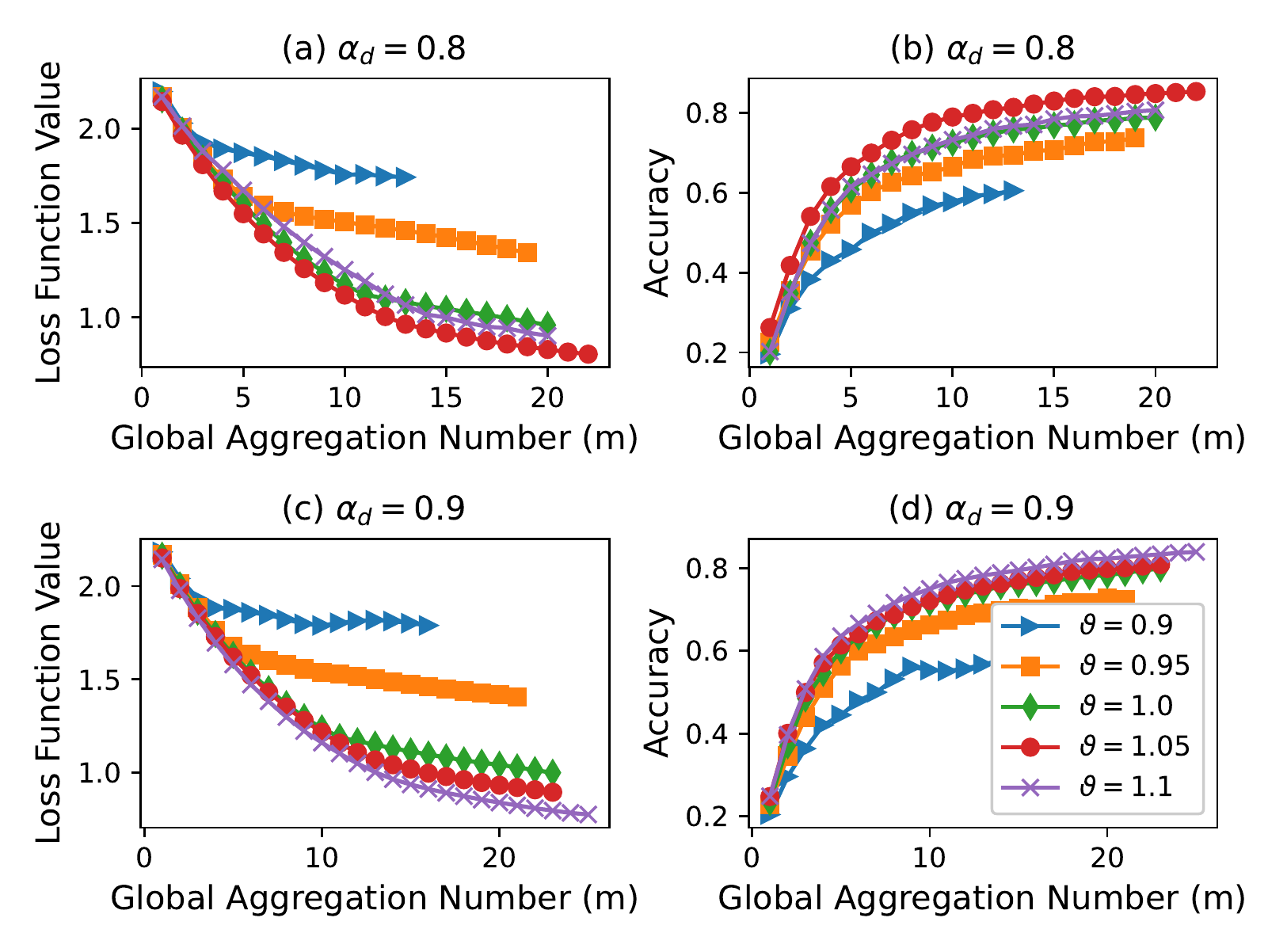}
		\caption{Loss function value and Accuracy of the MLP model on the MNIST dataset vs. $m$, where  $\vartheta$ takes different values. The curves of $\vartheta=0.9$ terminate earlier at smaller $m$ values than the other curves, because they can accommodate smaller numbers of global aggregations given the privacy level~$\epsilon$.}
		\label{fig-noise-update}
	\end{figure}

	We evaluate the online adjustment of the aggregation number and DP noise variance proposed in Section~\ref{sec: noise update}.
	Figs.~\ref{fig-M-update} and~\ref{fig-noise-update} plot the updated maximum number of global aggregations, i.e., $M'$, the loss function value, and the accuracy under different values of $\vartheta$ when $\alpha_d= 0.8$ and  $0.9$.
	As shown in Fig.~\ref{fig-M-update}, the maximum number of global aggregations $M'$ decreases over $m$, resulting from the updating of $\sigma_m$ to keep the global loss function decreasing, and meanwhile, the privacy protection level $\epsilon$ satisfied. 
	As shown in Figs.~\ref{fig-noise-update}(a) and~\ref{fig-noise-update}(c),  the loss function value decreases with the increase of $M$ under all considered $\vartheta$ values. 
	Among all four curves, $\vartheta = 1.05$ provides the best learning performance when $\alpha_d = 0.8$, as shown in Figs.~\ref{fig-noise-update}(a) and \ref{fig-noise-update}(b), and $\vartheta = 1.1$ provides the best learning performance when $\alpha_d = 0.9$, as shown in Figs.~\ref{fig-noise-update}(c) and~\ref{fig-noise-update}(d). 
	Compared to the results without online adjustment of the DP noise variance, i.e., Figs.~\ref{fig-loss-noise-M-lep5} and~\ref{fig-acc-noise-M-lep5}, the online adjustment of the DP noise variance can improve the learning performance of both the loss and accuracy in the case of $\vartheta \geq 1$; see Fig.~\ref{fig-noise-update}. 
	
	\subsubsection{Defence against Membership Inference Attacks}
	\begin{table}[!t]
		\caption{Comparison of membership inference attack success rate with and without the proposed DP mechanism, averaged over 50 independent trials with 25 shadow models.}
		\centering
		\begin{tabular}{|c|c|c|c|c|}
			\hline
			& $\vartheta = 0.95$ & $ \vartheta = 1.0$ & $\vartheta = 1.05$ & no DP                   \\ \hline
			$\epsilon = 5$   & 0.581             & 0.582             & 0.573  & \multirow{3}{*}{0.993} \\ \cline{1-4}
			$\epsilon = 10$ & 0.599             & 0.587             & 0.583 &                         \\ \cline{1-4}
			$\epsilon = 20$ & 0.607             & 0.602             & 0.596              &    
			\\ \hline
		\end{tabular}\label{tab-mia}
	\end{table}
	
	Table~\ref{tab-mia} evaluates the effectiveness of the proposed method in defending against membership inference attacks, which aim to determine whether a specific data point was used in training the model. The results indicate that without the mechanism, the attack success rate of the membership inference attack can be as high as 99.3\%. By contrast, when the mechanism is employed, the attack success rate drops significantly, e.g., by about 40\% to less than 60\% when $\vartheta = 1.05$.

	\subsection{Extension to SVM and CNN models}
	The proposed DP mechanism with time-varying perturbation noise can be readily applied to SVM and CNN models:
	\begin{itemize}
		\item The SVM model is trained using a standard quadratic optimization algorithm on the ADULT dataset. The loss function is $F({\bm \omega}) = \frac{\lambda_r}{2}\left\|{\bm \omega} \right\|_2^2 + \max\left\lbrace0,\; \beta_n - {\bm \omega}^T {\cal D}_{k,n} \right\rbrace$, where $\lambda_r > 0$ is a regularization coefficient; ${\cal D}_{k,n}$ is the $n$-th sample in ${\cal D}_{k}$, i.e.,  the dataset at the $k$-th user; and $\beta_n \in \{-1,1\}$ for $n =1, \cdots, \left| {\cal D}_{k}\right|$. 
		
		\item The CNN model contains two convolutional layers with a kernel size of five and three fully-connected layers. The CNN model is trained separately on the CIFAR10 and FMNIST datasets. We adopt the ReLU units and softmax of ten classes for the ten classes of the CIFAR10 and the ten digits of the FMNIST. The CNN model is trained using the SGD to minimize the loss function. 
	\end{itemize}
	
	\begin{figure}[!t]
		\centering
		\begin{subfigure}[b]{0.24\textwidth}
			\centering
			\includegraphics[width=\textwidth]{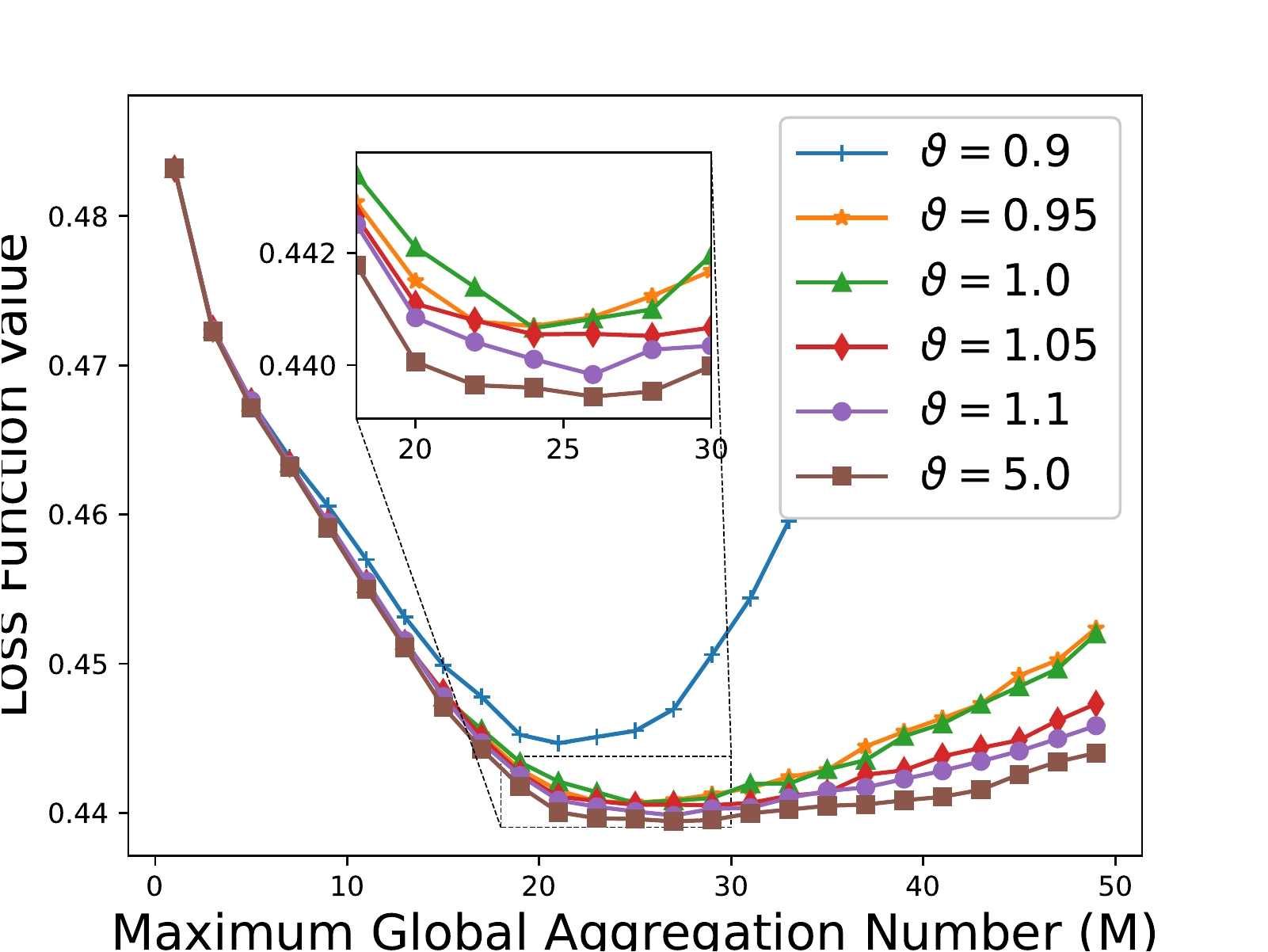}
			\caption{Loss function value vs. $M$}
			\sublabel{fig-svm-loss-noise-ep10}
		\end{subfigure}
		\begin{subfigure}[b]{0.24\textwidth}
			\centering
			\includegraphics[width=\textwidth]{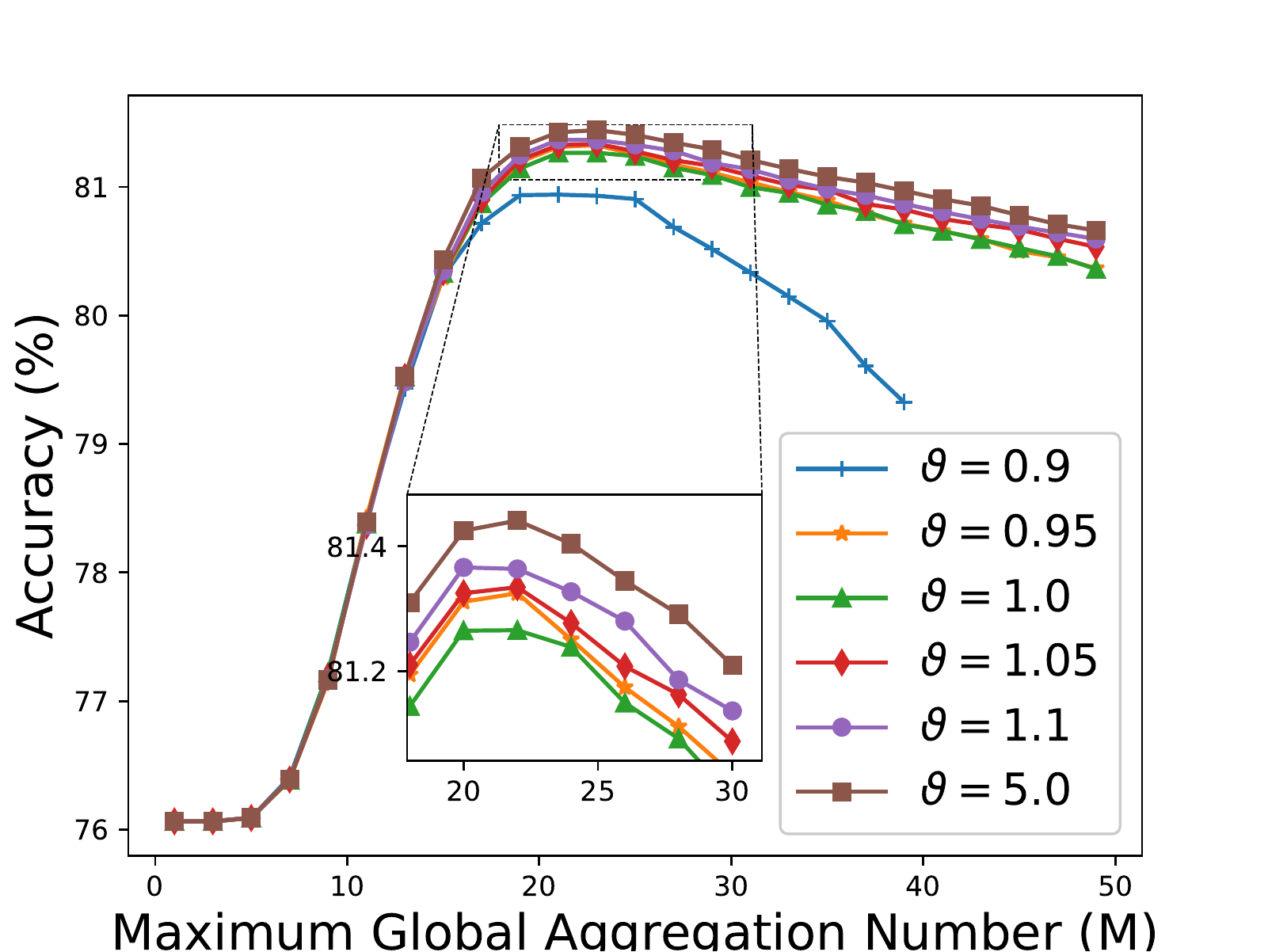}
			\caption{Accuracy vs. $M$}
			\sublabel{fig-svm-acc-noise-ep10}
		\end{subfigure}
		\caption{Loss function value and Accuracy of the SVM model on the ADULT dataset vs. the maximum number of global aggregations $M$ under different values of $\vartheta$, where $\epsilon = 10$.}
		\label{fig-noise-svm}
	\end{figure}
	\begin{figure}[!t]
		\centering
		\begin{subfigure}[b]{0.24\textwidth}
			\centering
			\includegraphics[width=\textwidth]{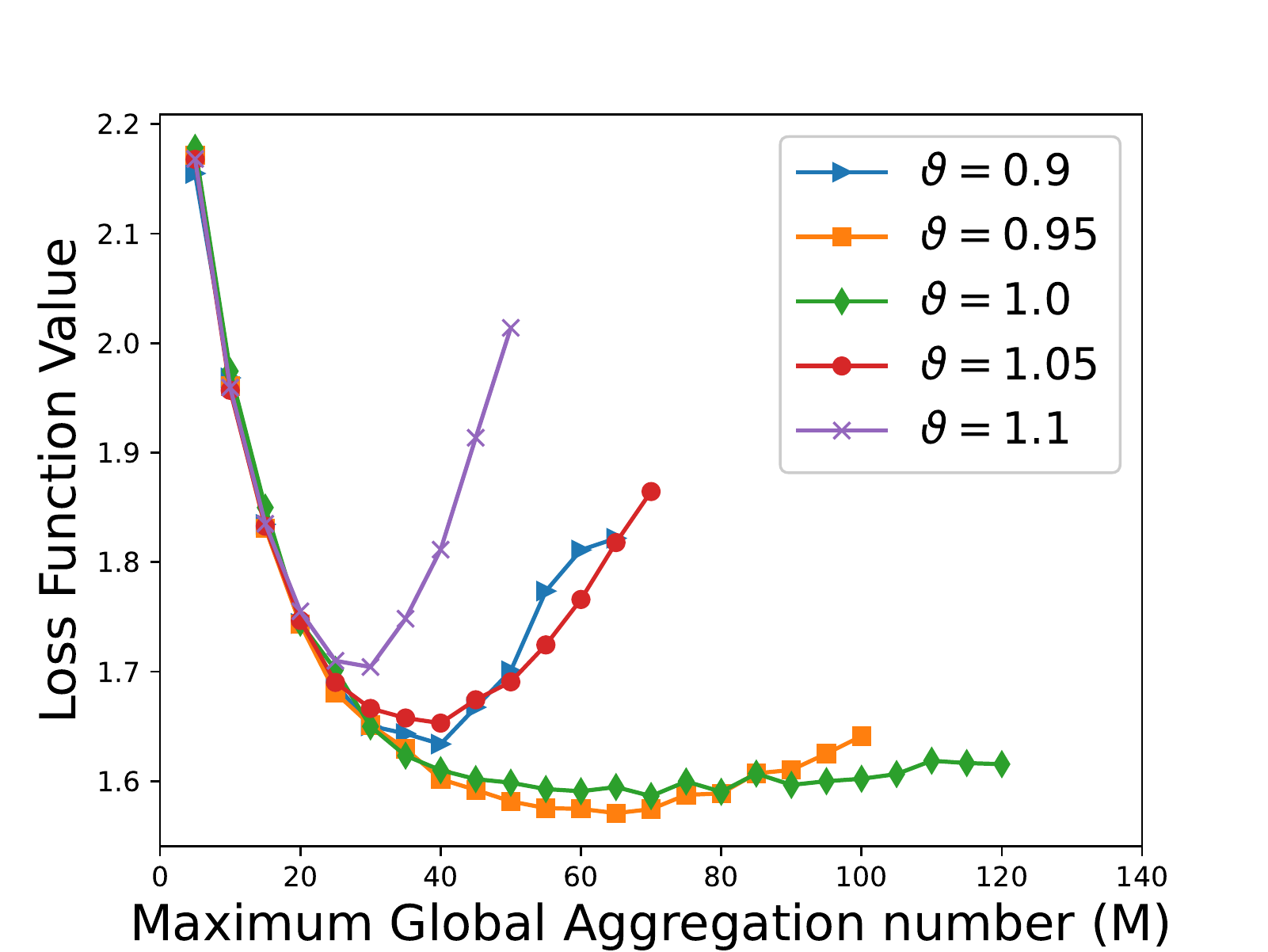}
			\caption{Loss function value vs. $M$ (CNN on CIFAR10)}
			\sublabel{fig-cifar-loss-privacy-ep10}
		\end{subfigure}
		\hfill
		\begin{subfigure}[b]{0.24\textwidth}
			\centering
			\includegraphics[width=\textwidth]{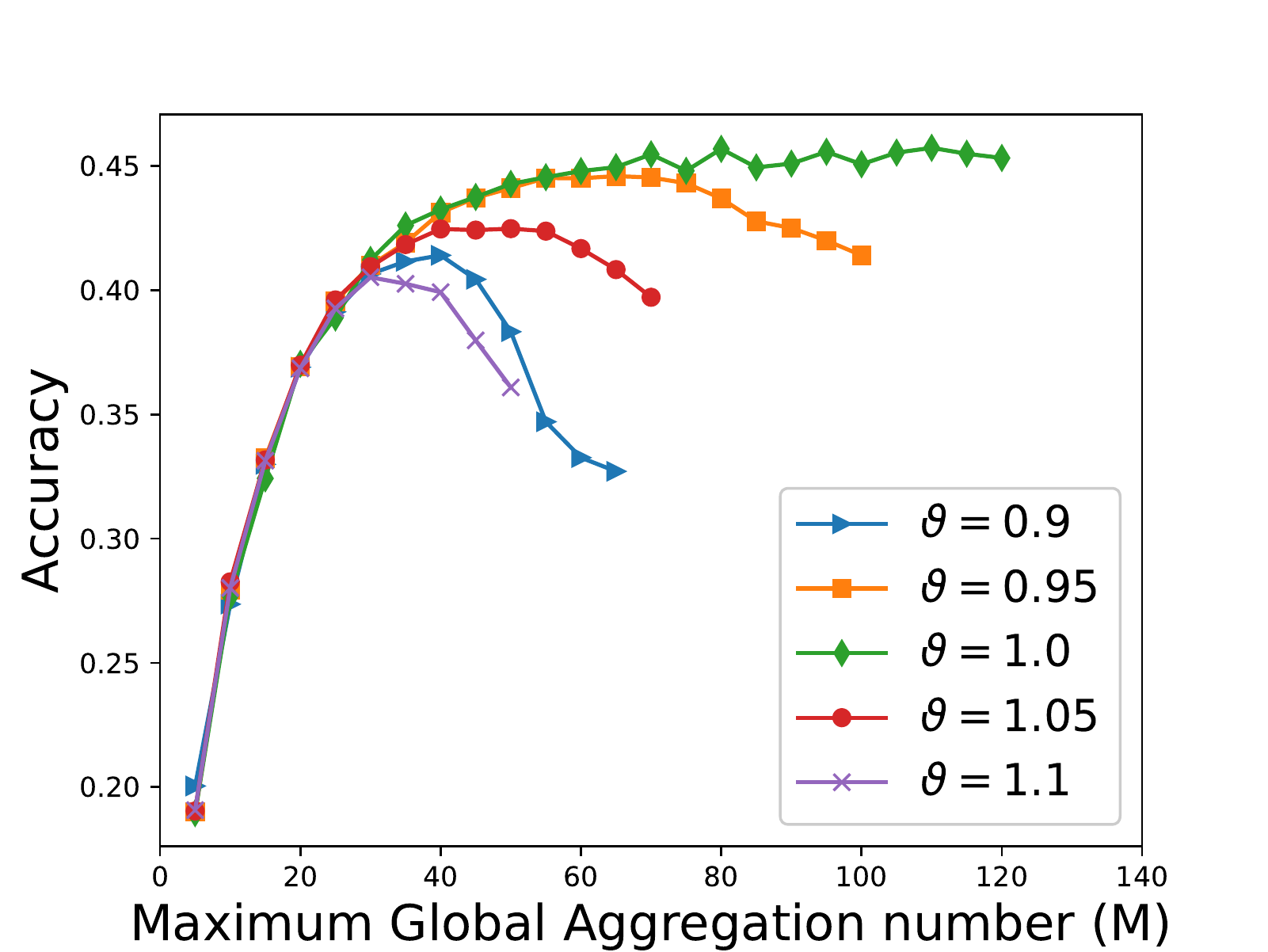}
			\caption{Accuracy vs. $M$ (CNN on CIFAR10)}
			\sublabel{fig-cifar-acc-privacy-ep10}
		\end{subfigure}
		\begin{subfigure}[b]{0.24\textwidth}
			\centering
			\includegraphics[width=\textwidth]{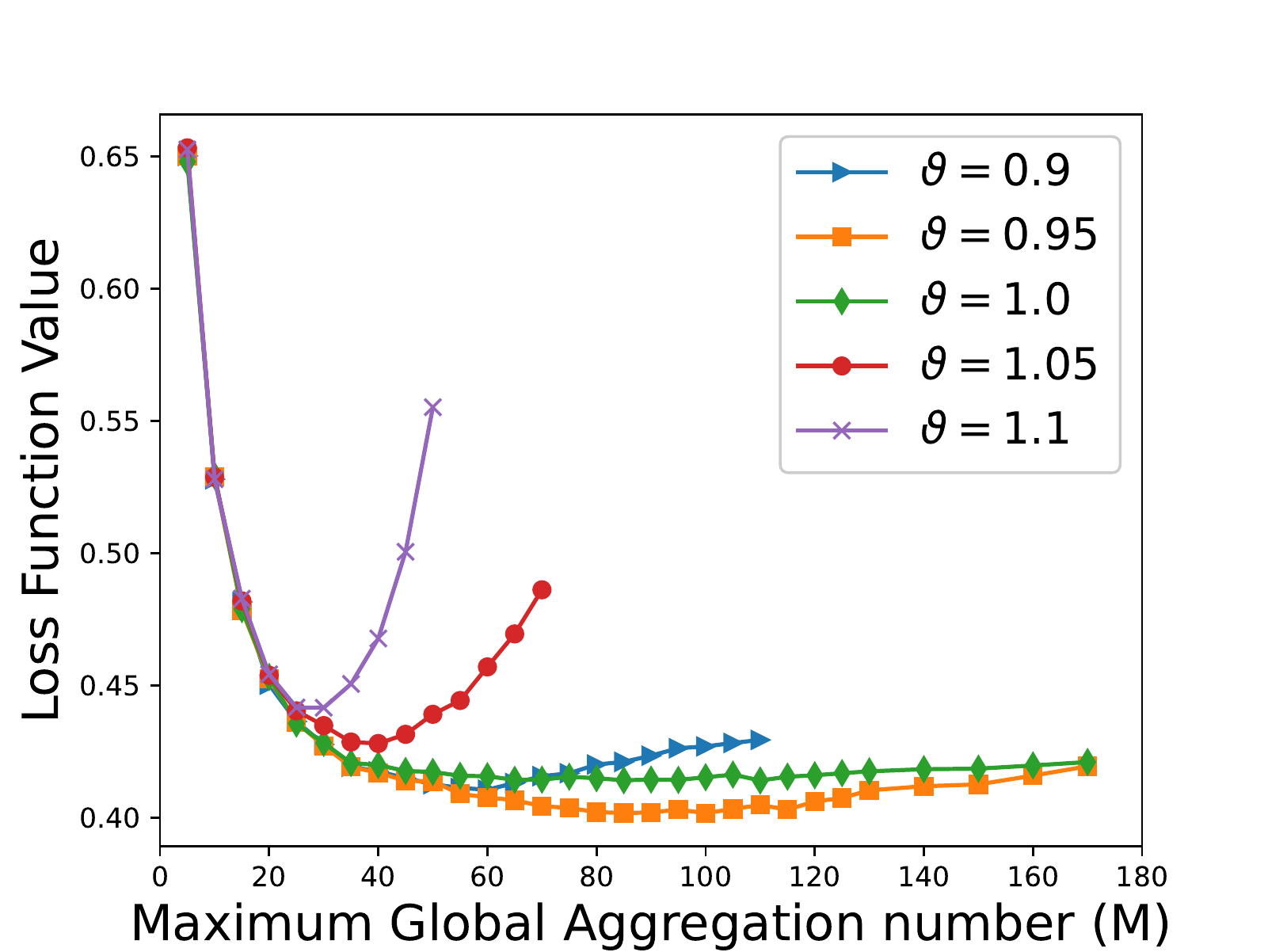}
			\caption{Loss function value vs. $M$ (CNN on FMNIST)}
			\sublabel{fig-fmnist-loss-privacy-ep10}
		\end{subfigure}
		\hfill
		\begin{subfigure}[b]{0.24\textwidth}
			\centering
			\includegraphics[width=\textwidth]{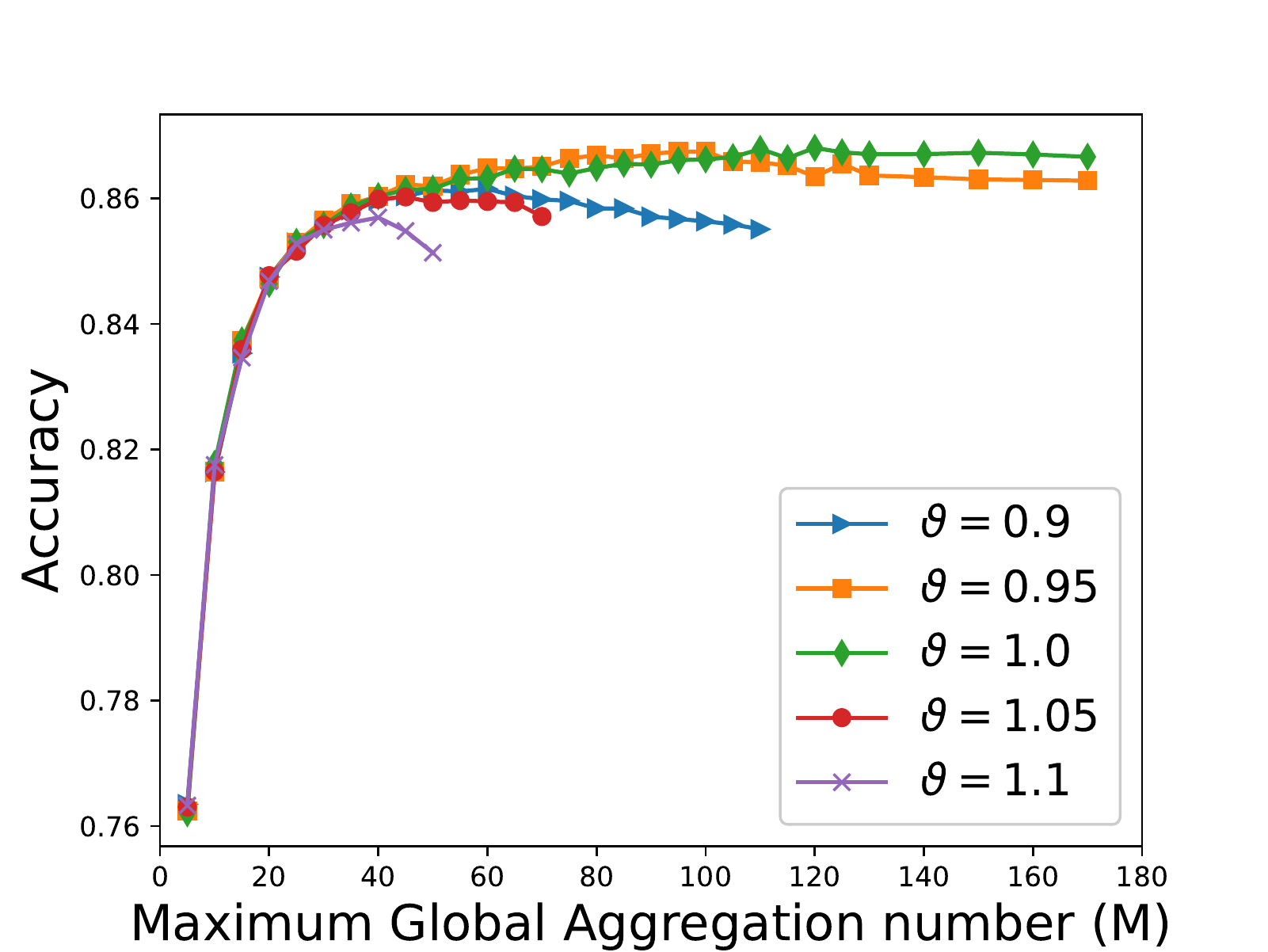}
			\caption{Accuracy vs. $M$ (CNN on FMNIST)}
			\sublabel{fig-fmnist-acc-privacy-ep10}
		\end{subfigure}
		\caption{Loss function value and Accuracy of the CNN model on the CIFAR10 and FMNIST datasets vs. $M$ under different values of $\vartheta $, where $\epsilon = 10$.}
		\label{fig-privacy-cifar-cnn}
	\end{figure}
	Figs.~\ref{fig-svm-loss-noise-ep10} and~\ref{fig-svm-acc-noise-ep10} plot the (testing) loss function and accuracy of the FL of the SVM model on the ADULT dataset under the proposed DP mechanism with time-varying perturbation noise variance. 
	Figs.~\ref{fig-cifar-loss-privacy-ep10}~and~\ref{fig-cifar-acc-privacy-ep10} plot those of the CNN model on the CIFAR10 dataset. Figs.~\ref{fig-fmnist-loss-privacy-ep10} and~\ref{fig-fmnist-acc-privacy-ep10} plot those of the CNN model on the FMNIST dataset. 
	
	Consistent with the observations made under the MLP model in Section~\ref{experiment: MLP}, there exists the optimal number of global aggregations, $M^*$, that minimizes the loss function of the SVM and CNN models (and maximizes their accuracy) while satisfying the $(\epsilon,\delta)$-DP privacy level. 
	The value of $\vartheta$ can also be configured to positively impact the utility of FL.
	
	On the other hand, Figs.~\ref{fig-svm-loss-noise-ep10} and~\ref{fig-svm-acc-noise-ep10} show that the time-increasing noise perturbation ($\vartheta>1$) achieves the best learning performance (i.e., achievable smallest loss and best accuracy), followed by the time-invariant noise perturbation ($\vartheta=1$) and then time-decreasing noise perturbation ($\vartheta<1$) on the SVM model. The time-decreasing noise perturbation ($\vartheta<1$) performs the best on the CNN models under both of the considered datasets.
	In contrast, $\vartheta>1$ is the best, followed by $\vartheta<1$, and $\vartheta=1$ is the worst on the MLP model; see Fig.~\ref{fig-loss-noise-M-lep5}.
	This is due to the distinct network architectures of the MLP, SVM, and CNN models.

	Fig.~\ref{fig-privacy-svm} illustrates the effect of increasing the maximum allowed number of global aggregations, $M$, on the loss and accuracy of the learning process for different values of the privacy parameter, $\epsilon$. The figure is based on the SVM model and the ADULT dataset, where $\vartheta = 1.05$ and $M = 50$. Each curve in the figure represents the results of a standalone training process with a given $\vartheta$ and $\epsilon$.
	It is observed in Fig.~\ref{fig-privacy-svm} that as $\epsilon$ increases, the loss function values decrease and approach the case with no DP noise perturbation (i.e., $\epsilon \to \infty$). Increasing $\epsilon$ can also lead to improved accuracy.
	It is also observed that the FL does not diverge under $\epsilon\rightarrow \infty$, as opposed to the rest of the $\epsilon$ values. 
	This is because when $\epsilon$ goes to infinity, no privacy is required and the FL considered is expected to behave like regular FedAvg.
	
	Fig.~\ref{fig-privacy-cnn-cifar} plots the (testing) loss and accuracy of the CNN models on the CIFAR10 and FMNIST datasets under different settings of the privacy level $\epsilon$.
	We see that the loss functions of the CNN models are also convex with respect to $M$, which is in line with Corollary~\ref{rema} and the observations made on the MLP model in Section~\ref{experiment: MLP}. 
	It is also seen that, as $\epsilon$ increases, the loss function values decrease.
	The optimal number of global aggregations $M^*$ also increases.
	This is also consistent with the observations made on the MLP model in Section~\ref{experiment: MLP}.

	\begin{figure}[!t]
		\centering
		\begin{subfigure}[b]{0.24\textwidth}
			\centering
			\includegraphics[width=\textwidth]{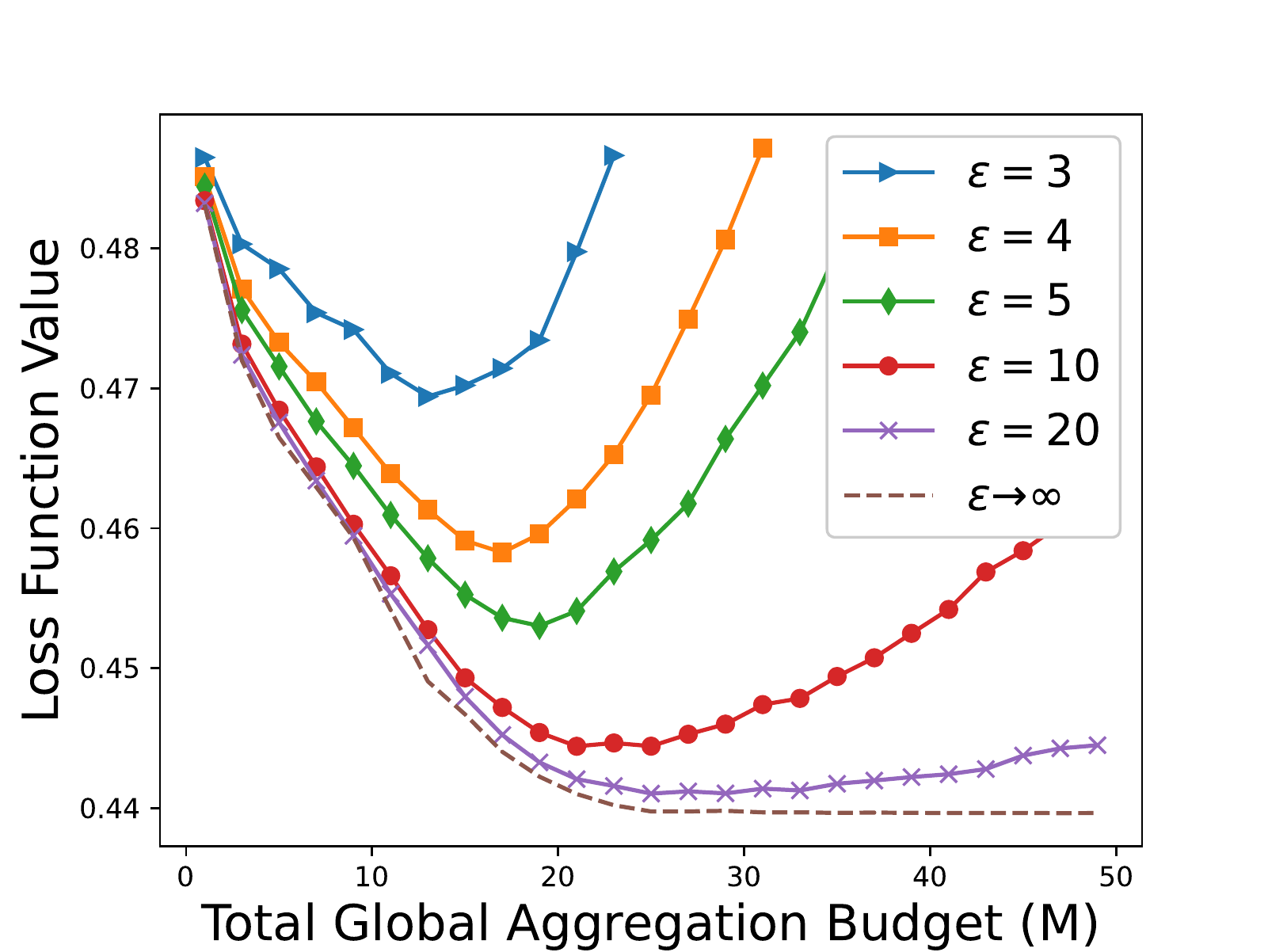}
			\caption{Loss function value vs. $M$}
			\sublabel{fig-svm-loss-privacy-ep10}
		\end{subfigure}
		\hfill
		\begin{subfigure}[b]{0.24\textwidth}
			\centering
			\includegraphics[width=\textwidth]{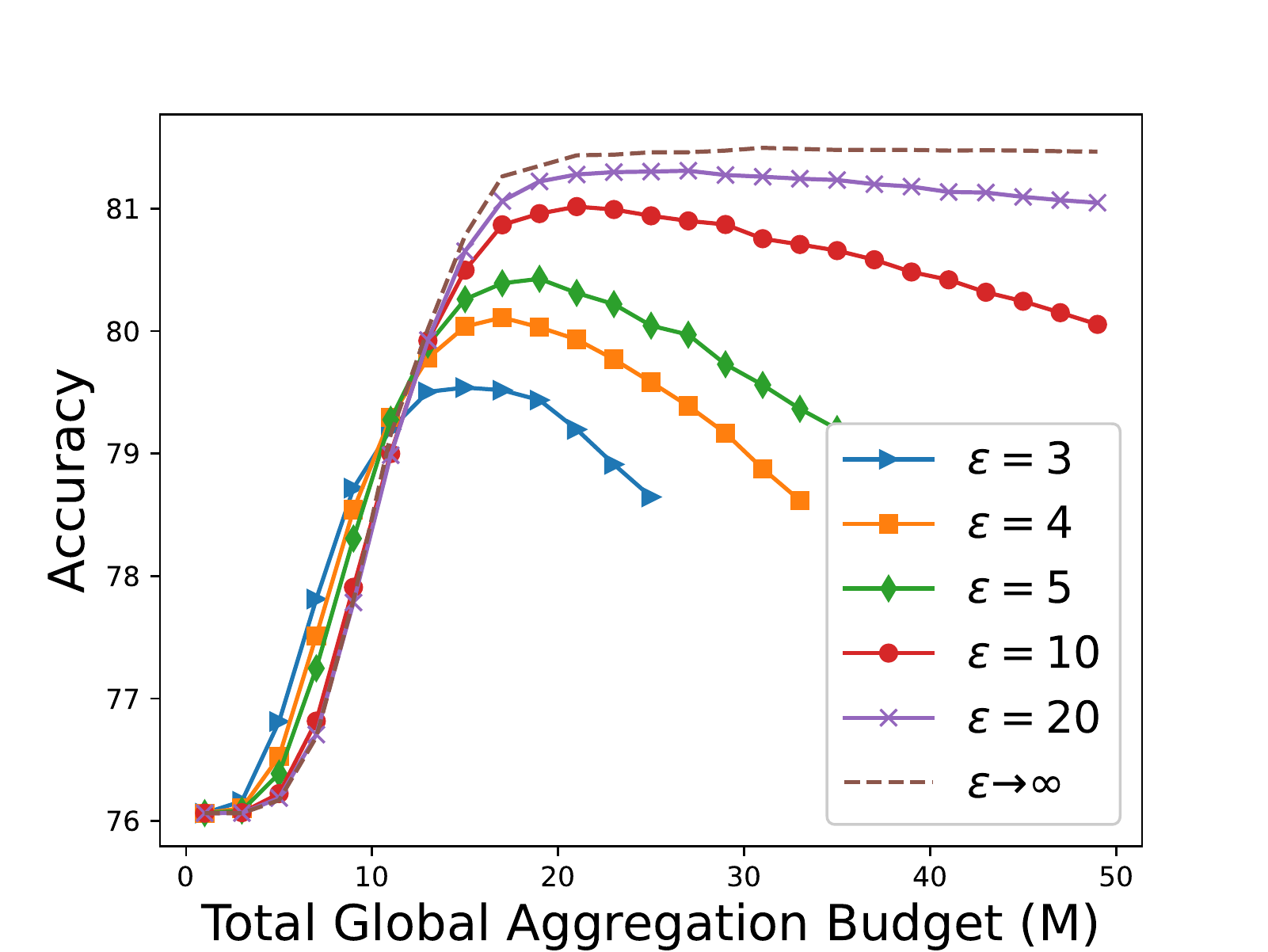}
			\caption{Accuracy vs. $M$}
			\sublabel{fig-svm-acc-privacy-ep10}
		\end{subfigure}
		\caption{Loss function value and Accuracy of the SVM model on the ADULT dataset vs. $M$ under different values of $\epsilon$, where $\vartheta = 1.05$.}
		\label{fig-privacy-svm}
	\end{figure}
	\begin{figure}[!t]
		\centering
		\begin{subfigure}[b]{0.24\textwidth}
			\centering
			\includegraphics[width=\textwidth]{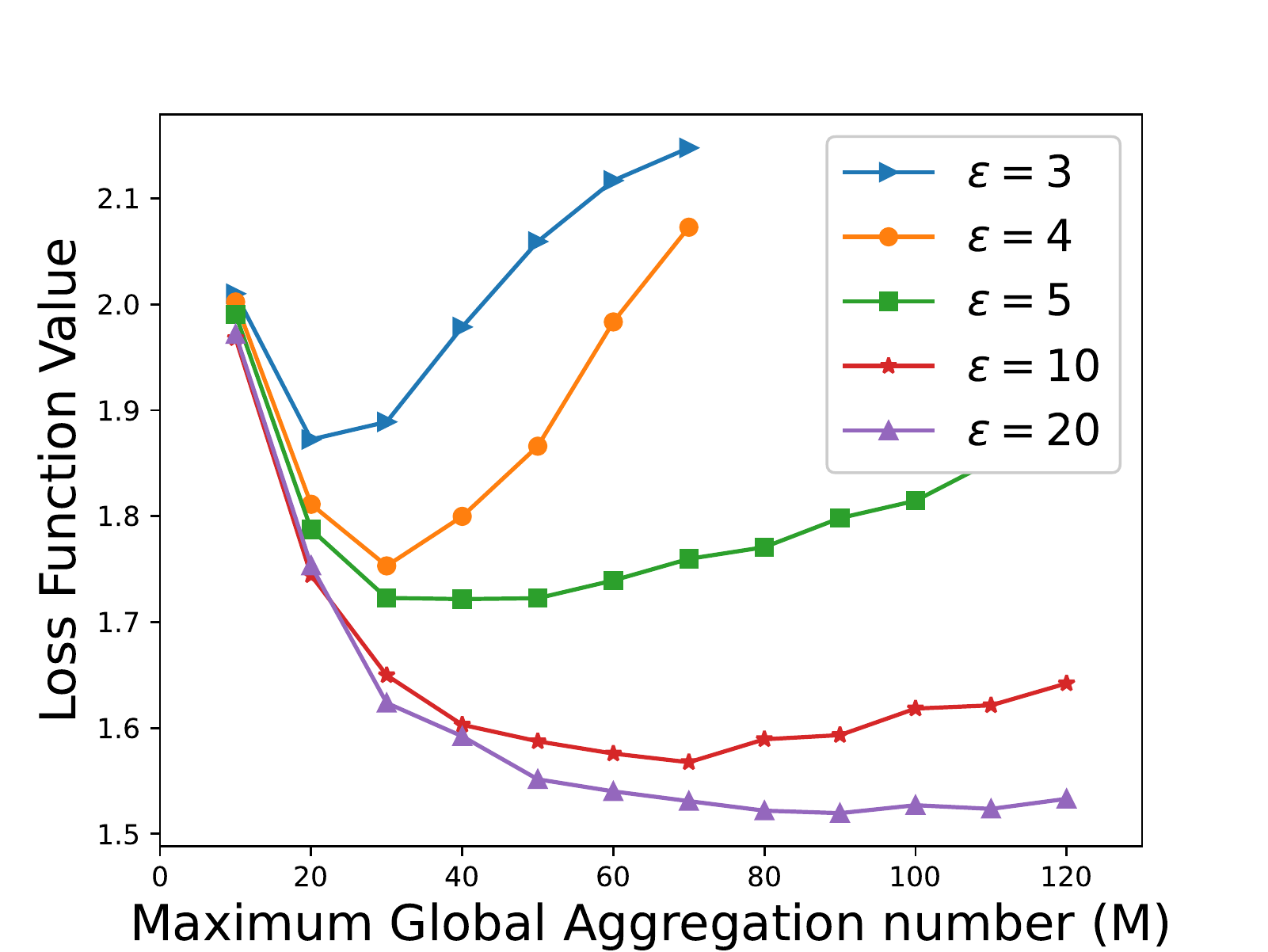}
			\caption{Loss function value vs. $M$ (CNN on CIFAR10).}
			\sublabel{fig-privacy-loss-cnn-cifar}
		\end{subfigure}
		\begin{subfigure}[b]{0.24\textwidth}
			\centering
			\includegraphics[width=\textwidth]{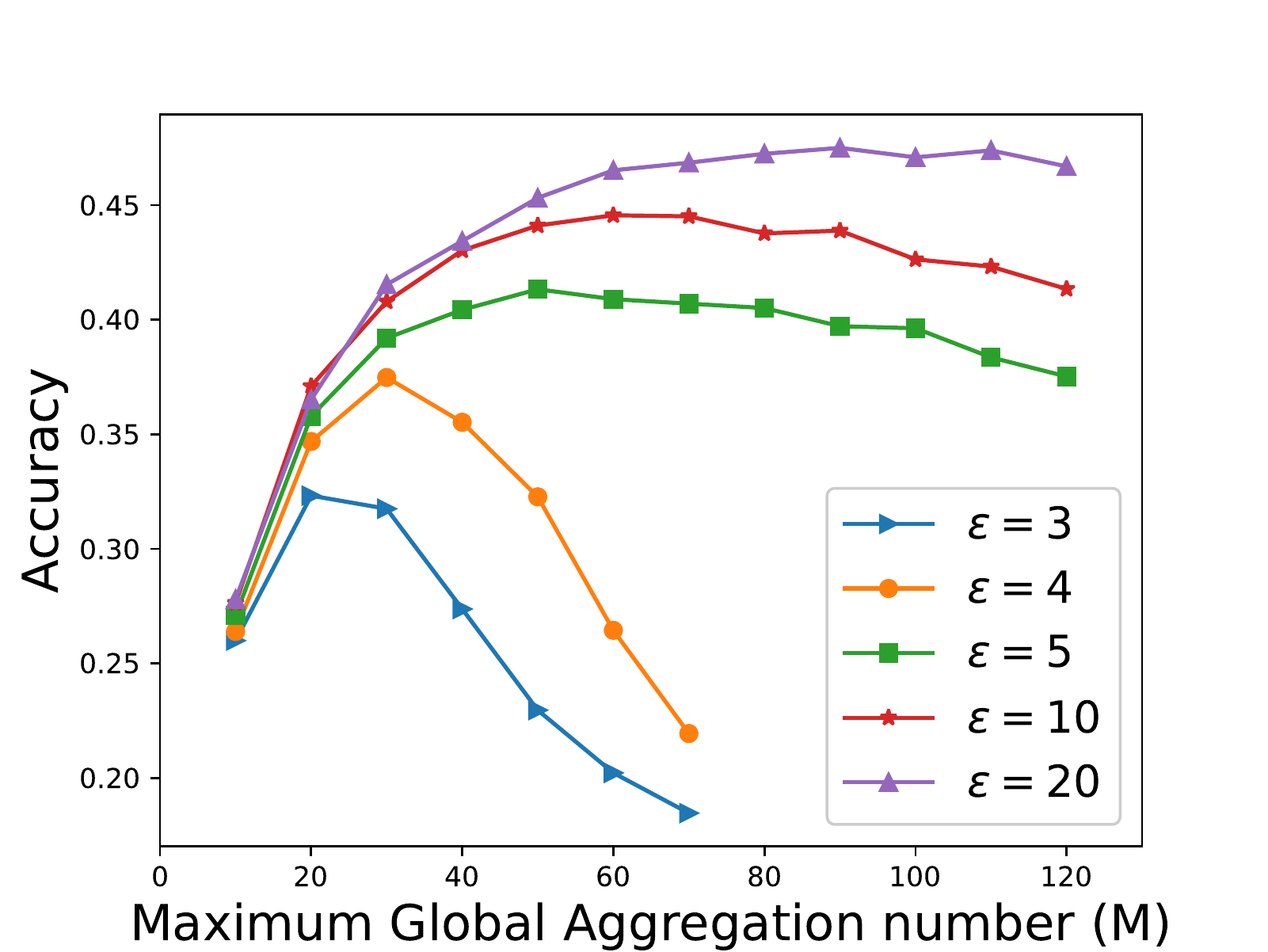}
			\caption{Accuracy vs. $M$ (CNN on CIFAR10).}
			\sublabel{fig-privacy-acc-cnn-cifar}
		\end{subfigure}
		\begin{subfigure}[b]{0.24\textwidth}
			\centering
			\includegraphics[width=\textwidth]{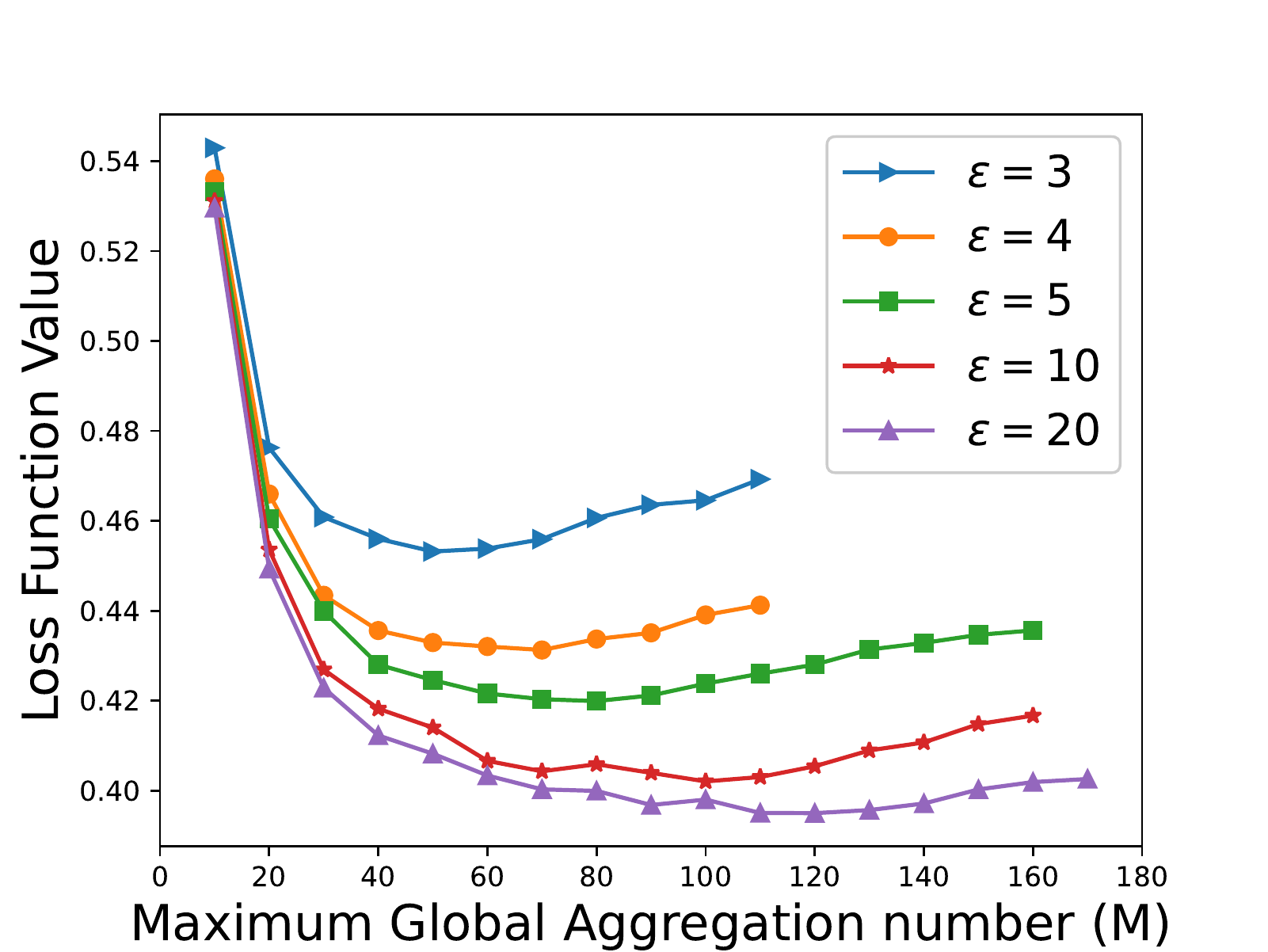}
			\caption{Loss function value vs. $M$ (CNN on FMNIST).}
			\sublabel{fig-privacy-loss-cnn-fmnist}
		\end{subfigure}
		\begin{subfigure}[b]{0.24\textwidth}
			\centering
			\includegraphics[width=\textwidth]{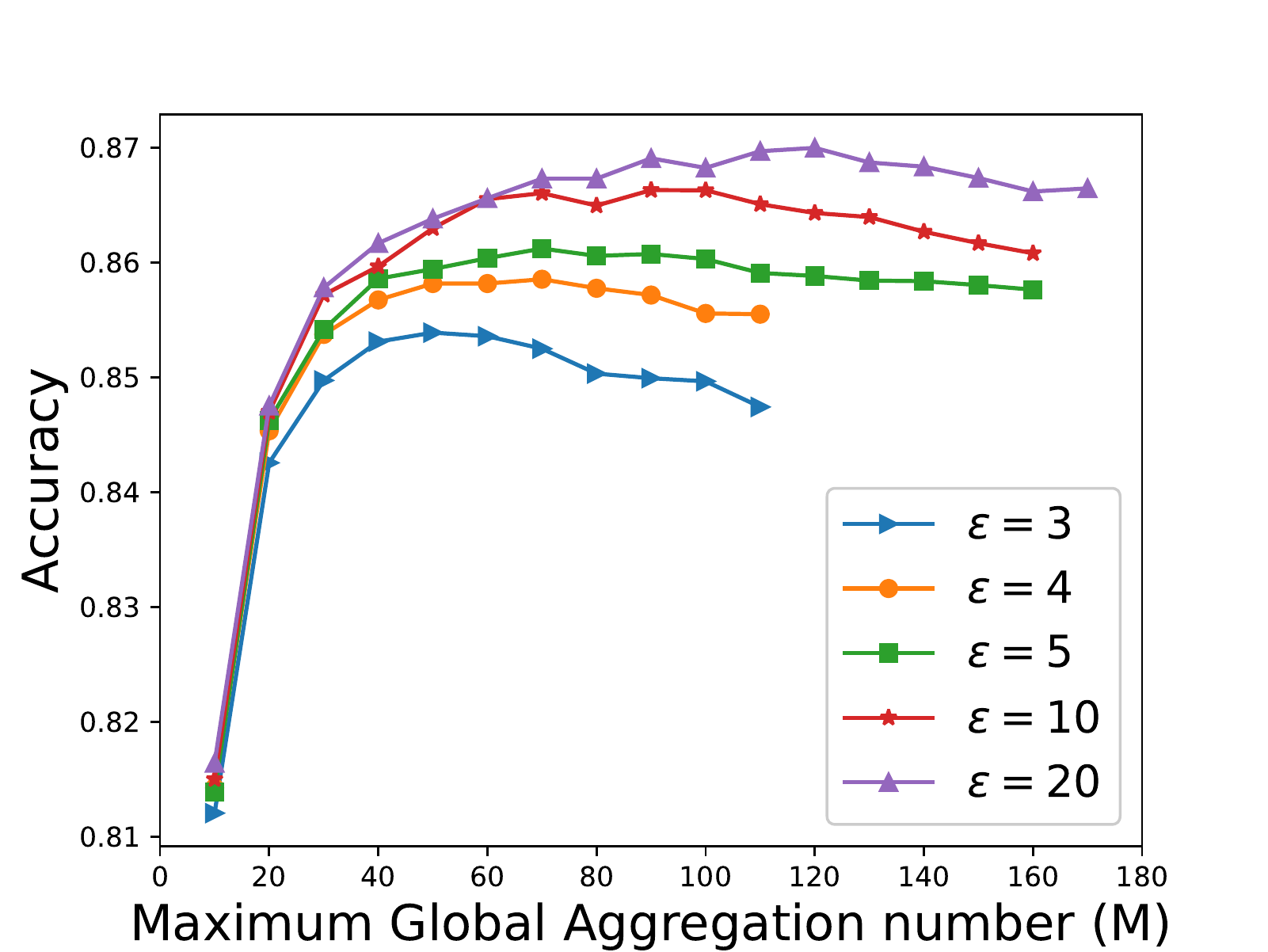}
			\caption{Accuracy vs. $M$ (CNN on FMNIST).}
			\sublabel{fig-privacy-acc-cnn-fmnist}
		\end{subfigure}
		\caption{Loss function value and Accuracy of the CNN model on the CIFAR10 and FMNIST datasets vs. the maximum number of global aggregations $M$ under different values of $\epsilon$, where 
			$\vartheta = 1.05$.}
		\label{fig-privacy-cnn-cifar}
	\end{figure}

	\section{Conclusion}\label{sec-conclusion}
	This paper has proposed and analyzed a new DP mechanism with a time-varying noise amplitude to balance the privacy and utility of FL.  
	We have established the varying amplitude as a function of the maximum number of global aggregations and the privacy protection levels. 
	We have also derived a convergence upper bound for the loss function of MLP models protected by the new mechanism, revealing a trade-off between the loss and privacy. 
	The number of global aggregations has been optimized based on the upper bound.
	Extensive experiments have assessed the convergence and utility of three different ML models trained using FL and protected by the new DP mechanism.
	The new DP mechanism with time-varying noise amplitudes has been seen to exhibit faster convergence and better accuracy under given privacy protection levels, compared to existing solutions.
	
	\appendices
	\section{Proof of Theorem~\ref{theo_DP noise}}\label{append-A}
	
	The privacy loss of a mechanism ${\cal M}$ is defined as~\cite{Abadi2016Deep}
	\begin{equation}
	loss_p = \exp(\alpha_{\cal{M}}(\lambda)),
	\end{equation}
	where $\alpha_{\cal{M}}(\lambda)$ is the $\lambda$-th moment, which is the logarithm of the moment generating function assessed at value $\lambda$.
	Based on the composability of the moment~\cite[Theorem~2]{Abadi2016Deep}, we have the $\lambda$-th moment for the time-varying Gaussian mechanism:
	\begin{equation}\label{eq-bound}
	\begin{aligned}
	\alpha_{\cal M}(\lambda) & \leq \sum_{m=1}^{M} \alpha_{{\cal M}_m}(\lambda,\sigma_m) = \sum_{m=1}^{M} \frac{q \lambda (\lambda +1)\Delta s^2}{2\sigma_m^2}\\
	& = \frac{q \lambda (\lambda +1)(\vartheta-\vartheta^{1-M})\Delta s^2}{2(\vartheta-1)\sigma^2}.
	\end{aligned}
	\end{equation}
	By exploiting the tail bound of the moment~\cite{Abadi2016Deep}, we can obtain the value of $\delta$ that satisfies the mechanism $\cal M$, i.e.,
	\begin{equation}\label{eq_tail_bound}
	\begin{aligned}
	\delta & = \underset{\lambda}{\min} \exp\left(\alpha_{\cal M}(\lambda) - \lambda \epsilon\right)\\
	& = \underset{\lambda}{\min} \exp\left(\frac{q \lambda (\lambda +1)(\vartheta-\vartheta^{1-M})(\Delta s)^2}{2(\vartheta-1)\sigma^2}- \lambda \epsilon\right).
	\end{aligned} 
	\end{equation}
	Let $g(\lambda) = \frac{q \lambda (\lambda +1)(\vartheta-\vartheta^{1-M})(\Delta s)^2}{2(\vartheta-1)\sigma^2}- \lambda \epsilon$. Since $\exp(\cdot)$ is a monotonically increasing function, the optimization problem in~\eqref{eq_tail_bound} is equivalent to finding the optimum of $\lambda$, denoted by $\lambda^*$, to minimize $g(\lambda)$. We derive the first-order derivative of $g(\lambda)$, as given by
	\begin{equation}
	g'(\lambda) = \frac{q (2\lambda +1)(\vartheta-\vartheta^{1-M})(\Delta s)^2}{2(\vartheta-1)\sigma^2}- \epsilon.
	\end{equation}
	By setting $g'(\lambda) = 0$, we obtain $\lambda^*$ as
	\begin{equation}\label{eq: optimal lambda}
	\lambda^* = \frac{\epsilon \sigma^2\left( \vartheta-1\right) }{q (\Delta s)^2\left(\vartheta - \vartheta^{1-M} \right)} - \frac{1}{2}.
	\end{equation}
	As a result, 
	\begin{equation}
	g(\lambda^*) \!= -\!\frac{q (\Delta s)^2(\vartheta\! -\! \vartheta^{1-M})}{2 \sigma^2(\vartheta-1)}\!\left(\!\frac{1}{2}-\!\frac{\epsilon \sigma^2 (\vartheta - 1)}{q(\Delta s)^2(\vartheta \!- \!\vartheta^{1 - M})} \!\right)^2. 
	\end{equation}
	The lower bound of $\delta$ can be given by
	\begin{equation}\label{eq_delta_LB}
	\delta \!\geq\! \exp\!\left[\!-\frac{q (\Delta s)^2(\vartheta\! -\! \vartheta^{1-M})}{2 \sigma^2(\vartheta-1)}\!\left(\!\frac{1}{2}\!-\!\frac{\epsilon \sigma^2 (\vartheta - 1)}{q(\Delta s)^2(\vartheta \!- \!\vartheta^{1-M})} \!\right)^2\!\right]\!.
	\end{equation}
	The RHS of \eqref{eq_delta_LB} provides the optimal value of $\delta$, denoted by~$\delta^*$.
	By taking the logarithm on both sides of~\eqref{eq_delta_LB}, we have
	\begin{equation}
	\begin{aligned}
	\ln (\delta) \!\geq\! -\frac{q (\Delta s)^2(\vartheta -\! \vartheta^{1-M})}{2 \sigma^2(\vartheta-1)}\!\left(\!\frac{1}{2}-\!\frac{\epsilon \sigma^2 (\vartheta - 1)}{q(\Delta s)^2(\vartheta -\! \vartheta^{1-M})} \!\right)^2.
	\end{aligned}
	\end{equation}
	Then, we have
	\begin{equation}\label{eq_delta}
	\begin{aligned}
	\ln \left(\!\frac{1}{\delta} \!\right) & \!\leq\! \frac{q (\Delta s)^2(\vartheta - \vartheta^{1-M})}{2 \sigma^2(\vartheta-1)}\left(\!\frac{1}{2}-\frac{\epsilon \sigma^2 (\vartheta - 1)}{\Delta s^2(\vartheta - \vartheta^{1-M})} \!\right)^2\\
	= & \frac{q (\Delta s)^2(\vartheta - \vartheta^{1-M})}{8 \sigma^2(\vartheta-1)}+\frac{\epsilon^2 \sigma^2 (\vartheta - 1)}{2q\Delta s^2(\vartheta - \vartheta^{1-M})} - \frac{\epsilon}{2}.
	\end{aligned}
	\end{equation}
	Since $\delta \in (0,1]$ and $\ln (\delta) \leq 0$, from~\eqref{eq_tail_bound} we have 
	\begin{equation}\label{eq_epsilon1}
	\frac{q \lambda (\lambda +1)(\vartheta-\vartheta^{1-M})(\Delta s)^2}{2(\vartheta-1)\sigma^2}- \lambda \epsilon \leq 0.	
	\end{equation}
	
	By substituting $\lambda^* $ in \eqref{eq: optimal lambda} into~\eqref{eq_epsilon1}, we have
	\begin{equation}\label{eq_epsilon2}
	\frac{q (\Delta s)^2(\vartheta - \vartheta^{1-M})}{8 \sigma^2(\vartheta-1)} \leq  {\frac{\epsilon}{4}}.
	\end{equation}
	Combining~\eqref{eq_delta} and~\eqref{eq_epsilon2} leads to  
	\begin{equation}
	\ln \left( \frac{1}{\delta}\right) \leq \frac{\epsilon^2 \sigma^2 (\vartheta - 1)}{2q(\Delta s)^2(\vartheta - \vartheta^{1-M})} - \frac{\epsilon}{4} \leq \frac{\epsilon^2 \sigma^2 (\vartheta - 1)}{2q\Delta s^2(\vartheta - \vartheta^{1-M})}.
	\end{equation}
	To ensure the $(\epsilon,\delta)$-DP of the considered FL system, we choose $\sigma$ that satisfies
	\begin{equation}
	\sigma^2 \leq \frac{2q(\Delta s)^2(\vartheta - \vartheta^{1-M})}{\epsilon^2 (\vartheta - 1)} \ln \left( \frac{1}{\delta}\right).
	\end{equation}
	This concludes the proof of the theorem.
	
	\section{Proof of Theorem~\ref{theorem-noise update}}\label{appendix-theo-5}
	Based on the definition and composability of the $\lambda$-th moment, we have the $\lambda$-th moment for the time-varying Gaussian mechanism, as given by
	\begin{subequations}\label{eq_alpha}
		\begin{align}
		&\alpha_{\cal M}(\lambda)  = \sum_{n=1}^{M'} \alpha_{{\cal{M}}_n}(\lambda,\sigma_n)\label{eq_alpha a} \\
		& = \sum_{n=1}^{m} \alpha(\lambda,\sigma_n) + \sum_{n=m+1}^{M'} \alpha(\lambda,\sigma_n) \label{eq_alpha b} \\
		& = \sum_{n=1}^{m} \frac{q \lambda (\lambda +1)(\Delta s)^2}{2\sigma_n^2} + \sum_{n=m+1}^{M'} \frac{q \lambda (\lambda +1)(\Delta s)^2}{2\sigma_n^2} \label{eq_alpha c}\\
		& \leq \left\{ \begin{aligned}
		& \frac{q \lambda (\lambda +1)(\Delta s)^2}{2} \left(\sum_{n=1}^{m} \frac{1}{\sigma_n^2} + \sum_{n=1}^{M'-m} \frac{1}{\sigma^2}\right)  ,\,{\text{if}~\vartheta \geq 1;}\\ 
		& \frac{q \lambda (\lambda +1)(\Delta s)^2}{2} \left(\sum_{n=1}^{m} \frac{1}{\sigma_n^2} + \frac{\vartheta^{m-M'}}{(1-\vartheta)\sigma^2}\right),\,{\text{if}~\vartheta < 1;} 
		\end{aligned}\right.\label{eq_alpha d}\\
		& = \left\{ \begin{aligned} 
		&\frac{q \lambda (\lambda \!+\!1)(\frac{\vartheta\!-\!\vartheta^{1-m}}{ \vartheta-1} + M' -m)(\Delta s)^2}{2\sigma^2},\,{\text{if}~\vartheta > 1;}\\
		&\frac{q \lambda (\lambda \!+\!1)M'(\Delta s)^2}{2\sigma^2},\,{\text{if}~\vartheta = 1;}\\
		&\frac{q \lambda (\lambda \!+\!1)({\vartheta^{1-m} - \vartheta} + \vartheta^{m-M'})(\Delta s)^2}{2(1- \vartheta)\sigma^2},\,{\text{if}~\vartheta < 1,}
		\end{aligned}\right.
		\end{align}
	\end{subequations}
	where~\eqref{eq_alpha d} is because $\sigma_n^2 = \vartheta^{n-1} \sigma^2 \leq \sigma^2$ if $\vartheta \geq 1$,
	and $\sum_{n=m+1}^{M'} \frac{1}{\sigma_n^2} \leq 
	\frac{1}{\vartheta}\frac{(\frac{1}{\vartheta})^{M'-m}-(\frac{1}{\vartheta})^{m+1}}{\frac{1}{\vartheta-1}} \leq \frac{\vartheta^{m-M'}}{1- \vartheta} $ if $\vartheta < 1$.
	
	By exploiting the tail bound of the moment~\cite{Abadi2016Deep}, we can obtain $\delta$ that satisfies the mechanism $\cal M$, as given by
	\begin{subequations}\label{eq_tail_bound 1}
		\begin{align}
		&\delta  = \underset{\lambda}{\min} \exp\left(\alpha_{\cal M}(\lambda) - \lambda \epsilon\right)\\
		& = \left\{ \begin{aligned}
		& \!\underset{\lambda}{\min} \exp\!\left[\!\frac{q \lambda (\lambda \!+\!1)(\Delta s)^2}{2\sigma^2}\!\left(\!\frac{\vartheta\!-\!\vartheta^{1- m}}{\vartheta-1} \!+\! M' \!-\! m \!\right) \!-\! \lambda \epsilon \right]\!,\\
		&\qquad\qquad\qquad\qquad\qquad\qquad\qquad\qquad\qquad\;\;{\text{if}~\vartheta > 1;}\\
		& \!\underset{\lambda}{\min} \exp\!\left[\!\frac{q \lambda (\lambda \!+\!1)M'(\Delta s)^2}{2\sigma^2}  - \lambda \epsilon \right]\!,\,{\text{if}~\vartheta = 1;}\\
		& \!\underset{\lambda}{\min} \exp\!\left[\!\frac{q \lambda (\lambda \!+\!1)(\Delta s)^2}{2\sigma^2}\!\left(\!\frac{\vartheta^{1- m} \!-\!\vartheta + \vartheta^{m-M'}}{\vartheta-1} \!\right) \!-\! \lambda \epsilon \right]\!,\\
		&\qquad\qquad\qquad\qquad\qquad\qquad\qquad\qquad\qquad\;\;{\text{if}~\vartheta < 1.}
		\end{aligned}\right.
		\end{align} 
	\end{subequations}
	
	Following the steps in the proof of Theorem~\ref{theo_DP noise}, we set $\lambda^* = \frac{\epsilon \sigma^2\left( \vartheta-1\right)}{q (\Delta s)^2\left[\vartheta - \vartheta^{1 - m}+ (M' - m)(\vartheta - 1) \right]} - \frac{1}{2}$ if $\vartheta \geq 1$ and $\lambda^* = \frac{\epsilon \sigma^2\left( \vartheta-1\right)}{q (\Delta s)^2\left[\vartheta - \vartheta^{1 - m}+ \vartheta^{m-M'} \right]} - \frac{1}{2}$ if $\vartheta < 1$. Then, the amplitude of the noise satisfying the $(\epsilon,\delta)$-DP of the training can be obtained in \eqref{eq_DP noise re}.
	
	\section{Proof of Theorem~\ref{theo_convergence bound}}\label{appendix_convergence bound}
	The proof starts by defining 
	\begin{equation}\label{eq_def_global_weight}
	{\bm \omega'}{(t)} \! \triangleq \! \sum_{k \in {\cal K}} p_k\left(\! {\bm \omega}_k(t)+{\bm{n}}_k(t)\!\right) \!=\! \sum_{k \in {\cal K}} p_k {\bm \omega}_k(t)+{\bm{n}}(t),
	\end{equation}
	where ${\bm n}(t) = \sum_{k \in {\cal K}} p_k {\bm{n}}_k(t)$.
	
	Based on the $L$-Lipschitz smoothness of the global loss function $F(\cdot)$ and the Taylor expansion, it follows that
	\begin{equation}\label{eq_taylor_expansion}
	\begin{aligned}
	F({\bm \omega'}(t+1)) -F({\bm \omega'}(t)) 
	& \leq \nabla F\left({\bm \omega'}(t) \right) \left({\bm \omega'}(t+1) - {\bm \omega'}(t)  \right) \\
	&\quad + \frac{L}{2} \left\| {\bm \omega'}(t+1) - {\bm \omega'}(t)  \right\|^2.
	\end{aligned}
	\end{equation} 
	According to the gradient descent, we have
	\begin{equation}\label{eq_gsd}
	{\bm \omega'}_k(t+1) = {\bm \omega'}_k(t) - \eta \nabla f_k({\bm \omega'}(t) ).
	\end{equation}
	By substituting~\eqref{eq_gsd} and~\eqref{eq_def_global_weight} into~\eqref{eq_taylor_expansion}, we obtain~\eqref{eq_F_w_t1}.
	
	\begin{figure*}[ht]
		\begin{subequations}\label{eq_F_w_t1}
			\begin{align}
			F \left( {\bm \omega'}(t\!  +\! 1)\!  \right) &- F\left(\! {\bm \omega'}(t)  \right) 
			\! \leq  \nabla F\left({\bm \omega'}(t)\right)\! \bigg(\! {\bm n}(t+1) - \eta\sum_{k \in {\cal K}^m} p_k \nabla f_k\left({\bm \omega'}(t)\right) \!\bigg) \! +\!  \frac{L}{2} \bigg\|\! \sum_{k \in {\cal K}^m} p_k \left( {\bm n}_k(t + 1)- \eta \nabla f_k\left({\bm \omega'}_k(t) \right) \right)\! \bigg\|^2\\
			&=  \frac{\eta^2L}{2} \bigg\|\sum_{k \in {\cal K}^m} p_k \nabla f_k\left({\bm \omega'}(t) \right) \bigg\|^2 - \eta \nabla F\left({\bm \omega'}(t) \right) \sum_{k \in {\cal K}^m} p_k \nabla f_k\left({\bm \omega'}(t) \right) + \frac{L}{2} \bigg\|\sum_{k \in {\cal K}^m} p_k  {\bm n}_k(t+1)\bigg\|^2 
			\end{align}
		\end{subequations}
		\hrulefill
	\end{figure*}
	
	Next, we take the expectation on both sides of \eqref{eq_F_w_t1} with respect to $K$ randomly chosen users out of the $U$ users at the $m$-th global aggregation, i.e., ${\cal K}^m$, and obtain
	\begin{subequations}\label{eq_F_w_t1_avg}
		\begin{align}
		&\mathbb{E}_{{\cal K}^m}\!\left\lbrace\! F\!\left(\!{\bm \omega'}(t\!+\!1) \!\right)\! \right\rbrace \!
		\!\leq \! F\left({\bm \omega'}(t) \right) - \eta \left\|\nabla F\!\left({\bm \omega'}(t)\right)\! \right\|^2  \\
		& \!+\! \frac{\eta^2L}{2} \mathbb{E}_{{\cal K}^m}\!\bigg\lbrace\!\Big\|\! \sum_{k \in {\cal K}^m}\! p_k \nabla f_k\!\left(\!{\bm \omega'}(t)\!\right)\! \Big\|^2 \!\bigg\rbrace \!+\! \frac{L}{2} \mathbb{E}_{{\cal K}^m}\!\left\lbrace\!\left\|{\bm n}(t\!+\!1)\right\|^2 \!\right\rbrace\!.
		\end{align}
	\end{subequations}
	Since $p_k = \frac{|{\cal D}_k|}{|{\cal D}|} = \frac{1}{K}$,  $\mathbb{E}_{{\cal K}^m}\{\| \sum_{k \in {\cal K}^m} p_k \nabla f_k({\bm \omega'}(t))\|^2 \}$ in \eqref{eq_F_w_t1_avg} is rewritten as \eqref{eq_delta_fk}.
	\begin{figure*}[ht]
		\begin{subequations}\label{eq_delta_fk}
			\begin{align}
			\mathbb{E}_{{\cal K}^m}\Bigg\lbrace\Big\| \sum_{k \in {\cal K}^m} p_k & \nabla f_k\left({\bm \omega'}(t) \right)\Big\|^2 \Bigg\rbrace  = \frac{1}{UK}\left\|\sum_{k \in {\cal U}} \nabla f_k\left({\bm \omega'}(t)\right)\right\|^2\\
			& = \frac{1}{UK} \sum_{k \in {\cal U}}\left\| \nabla f_k\left({\bm \omega'}(t) \right)\right\|^2  + \frac{1}{UK}\frac{K-1}{U-1}\sum_{k \in {\cal U}}\sum_{j \in {\cal U}/k}\left[\nabla f_k\left({\bm \omega'}(t) \right) \right]^T\nabla f_j\left({\bm \omega'}(t) \right)\\
			& = \frac{U-K}{UK(U-1)}\sum_{k \in {\cal U}}\left\| \nabla f_k\left({\bm \omega'}(t) \right)\right\|^2 + \frac{K-1}{UK(U-1)}\left\| \nabla F\left({\bm \omega'}(t) \right)\right\|^2.
			\end{align}
		\end{subequations}
		\hrulefill
	\end{figure*}
	By the definition of $\cal B$-local dissimilarity, $\mathbb{E}_{{\cal K}^m} [\left\| \nabla f_k({\bm \omega'})\right\|^2 ] \leq {\cal B}^2 \left\| \nabla F({\bm \omega'}) \right\|^2$, we obtain
	\begin{subequations}\label{eq_delta_F_w}
		\begin{align}
		&\mathbb{E}_{{\cal K}^m}\left\lbrace\left\| \sum_{k \in {\cal K}^m} p_k \nabla f_k\left({\bm \omega'}(t)\right) \right\|^2 \right\rbrace \notag\\
		&\; \leq \left[ \frac{(U-K){\cal B}^2}{K(U-1)} + \frac{K-1}{UK(U-1)}\right] \left\| \nabla F\left({\bm \omega'}(t)\right)\right\|^2.\tag{\ref{eq_delta_F_w}}
		\end{align}
	\end{subequations}
	
	Since $\mathbb{E}_{{\cal K}^m}\lbrace F({\bm \omega'}(t+1) ) \rbrace =F({\bm \omega'}(t+1)) $, substituting \eqref{eq_delta_F_w} into~\eqref{eq_F_w_t1_avg} yields 
	\begin{subequations}\label{eq_F_w1 2}
		\begin{align}
		F({\bm \omega'}(t+1)) & \leq F({\bm \omega'}(t)) \!+\! \phi \|\nabla F({\bm \omega'}(t)) \|^2 \notag\\
		& \quad  + \frac{L}{2} \mathbb{E}_{{\cal U}^t}\lbrace\left\|{\bm n}(t\!+\!1) \right\|^2 \rbrace, \tag{\ref{eq_F_w1 2}}
		\end{align}
	\end{subequations}
	where $\phi \triangleq \frac{\eta^2 L}{2}\left( \frac{(U-K){\cal B}^2}{K(U-1)} + \frac{K-1}{UK(U-1)} \right) - \eta$.
	
	Subtracting $F({\bm \omega}^*)$ from both sides of \eqref{eq_F_w1 2} gives
	\begin{equation}\label{eq_Fw1_Fw}
	\begin{aligned}
	&F({\bm \omega'}(t+1) ) - F({\bm \omega}^*)
	\leq  F({\bm \omega'}(t) ) - F({\bm \omega}^*)\\ 
	&\qquad + \phi \|\nabla F({\bm \omega'}(t)) \|^2 
	+ \frac{L}{2} \mathbb{E}_{{\cal K}^m}\lbrace\|{\bm n}(t+1) \| \rbrace.
	\end{aligned}
	\end{equation}
	
	Considering Polyak-Lojasiewicz condition
	and $\mathbb{E}_{{\cal K}^m}\lbrace\left\|{\bm n}(t+1) \right\|^2 \rbrace = {\vartheta}^{m} \mathbb{E}_{{\cal K}^m}\lbrace\left\|{\bm n}(1) \right\|^2\rbrace = {\vartheta}^{m} \sigma^2$, we obtain
	\begin{equation}\label{eq_delta_Fw}
	\|\nabla F({\bm \omega'}(t+1)) \|^2
	\leq \left( 1+ 2\rho \phi \right) \|\nabla F({\bm \omega'}(t) ) \|^2+ {\rho L}{\vartheta}^{m} \sigma^2.
	\end{equation}
	
	Based on the recurrence expression~\eqref{eq_delta_Fw}, we can obtain the upper bound of $\|\nabla F({\bm \omega'}(t)) \|^2$, as given by
	\begin{subequations}
		\begin{align}
		\left\|\nabla F\left({\bm \omega'}(t)\right) \right\|^2 & \leq \left(1+2\rho \phi \right)^m \left\|\nabla F\left({\bm \omega'}(0)\right) \right\|^2 \\
		& \quad + \frac{\rho L\left[\vartheta^m-\left(1+ 2\rho \phi \right)^m\right] }{\left( \vartheta - 1- 2\rho \phi\right)}\sigma^2.
		\end{align}
	\end{subequations}
	Using the Polyak-Lojasiewicz condition again gives
	\begin{subequations}
		\begin{align}
		F({\bm \omega'}(t) )\!- F({\bm \omega}^*) & \!\leq\! \left[\! F({\bm \omega'}(0)) - F({\bm \omega}^*)\!\right]\! \left(2\rho \phi +1 \right)^m  \\
		& \quad + \frac{L\left[\vartheta^m-\left(2\rho \phi +1 \right)^m\right] }{2\left( \vartheta - 2\rho \phi -1 \right)}\sigma^2,
		\end{align}
	\end{subequations}
	which concludes this proof. 
	
	\section{Proof of Corollary~\ref{rema}}\label{append_coro}
	Define the RHS of \eqref{eq_convergnece_bound} to be $g(T)$ for the brevity of notation. The second-order derivative of $g(T)$ regarding $T$ is 
	\begin{subequations}\label{eq_partial}
		\begin{align}
		& \frac{\partial^2 g(T)}{\partial T^2} = \frac{\Theta}{\tau} {\cal A}^{\frac{T}{\tau}} \ln^2\left({\cal A}\right) + 
		\frac{qL(\Delta s)^2\ln\left(\frac{1}{\delta}\right)\vartheta}{\tau^2\epsilon^2 \left(\vartheta-{\cal A} \right)\left(U-1\right)}\notag\\
		& \quad \times \left(\vartheta^{\frac{T}{\tau}} \ln^2\left(\vartheta\right) -{\cal A}^{\frac{T}{\tau}} \ln^2\left({\cal A}\right)+\left(\frac{{\cal A}}{\vartheta}\right)^{\frac{T}{\tau}}\ln^2\left(\frac{{\cal A}}{\vartheta}\right)  \right)\notag\\
		& \quad + L \frac{\partial^2 {\cal H}\left(\frac{T}{M}\right)}{\partial T^2},\tag{\ref{eq_partial}}
		\end{align}		 
	\end{subequations}
	where the third term on the RHS of \eqref{eq_partial} is the second-order derivative of ${\cal H}\left(\frac{T}{M}\right)$ with respect to $T$, i.e.,
	\begin{equation*}
	\begin{aligned}
	\frac{\partial^2 {\cal H}\left(\frac{T}{M}\right)}{\partial T^2} & = \frac{\gamma }{M^2} \left(\eta L + 1 \right)^{T/M}\ln^2\left(\eta L + 1 \right) \geq 0.
	\end{aligned}
	\end{equation*}
	We see that the first and the third terms on the RHS of \eqref{eq_partial} are positive, and the second term is positive when $\vartheta \geq {\cal A}$. As a result, $\frac{\partial^2 g(T)}{\partial T^2} > 0$ if $\vartheta \geq {\cal A}$, and the upper bound is a convex function of~$T$. 
	
	By substituting $M = \frac{T}{\tau}$ into $g(T)$, $g(T)$ can be treated as a function of $M$, denoted by ${\cal G}(M)$. The second-order derivative of ${\cal G}(M)$ with respect to $M$ is given by 
	\begin{equation}\label{eq_partial_M}
	\begin{aligned}
	& \frac{\partial^2 {\cal G}(M)}{\partial M^2} = {\Theta} {\cal A}^{M} \ln^2\left({\cal A}\right) + 
	\frac{qL(\Delta s)^2\ln\left(\frac{1}{\delta}\right)\vartheta}{\epsilon^2 \left(\vartheta-{\cal A} \right)\left(U-1\right)}\\
	& \quad \times \left(\vartheta^{M} \ln^2\left(\vartheta\right) -{\cal A}^{M} \ln^2\left({\cal A}\right)+\left(\frac{{\cal A}}{\vartheta}\right)^{M}\ln^2\left(\frac{{\cal A}}{\vartheta}\right)  \right)\\
	& \quad + L \frac{\partial^2 {\cal H}\left(\frac{T}{M}\right)}{\partial M^2},
	\end{aligned}		 
	\end{equation}
	where
	\begin{equation*}
	\begin{aligned}
	\frac{\partial^2 {\cal H}\left(\frac{T}{M}\right)}{\partial M^2} & = \frac{\gamma T^2}{M^4} \left(\eta L + 1 \right)^{T/M}\ln^2\left(\eta L + 1 \right) \\
	& \quad + \frac{2\gamma T}{M^3} \left(\eta L + 1 \right)^{T/M}\ln\left(\eta L + 1 \right) - \frac{2\eta \theta T}{M^3} \geq 0,
	\end{aligned}
	\end{equation*}
	when $M \leq \frac{T\ln\left(\eta L + 1 \right) }{\ln\left(\eta/\ln\left(\eta L + 1 \right) \right)}$, or $\tau \geq \frac{\ln\left(\eta/\ln\left(\eta L + 1 \right) \right)}{\ln\left(\eta L + 1 \right)}$. 
	
	Like \eqref{eq_partial}, we see that \eqref{eq_partial_M} is positive when $\vartheta \geq {\cal A}$ and $\tau \geq \frac{\ln\left(\eta/\ln\left(\eta L + 1 \right) \right)}{\ln\left(\eta L + 1 \right)}$. Therefore, we have $\frac{\partial^2 g(M)}{\partial M^2} > 0$ if $\tau \geq \frac{\ln\left(\eta/\ln\left(\eta L + 1 \right) \right)}{\ln\left(\eta L + 1 \right)}$, and the upper bound is convex in~$M$.

	\bibliographystyle{IEEEtran}
	\bibliography{reference_DP}

\end{document}